\newtheorem{theorem}{Theorem}[section]
\newtheorem{lemma}[theorem]{Lemma}
\newtheorem{proposition}[theorem]{Proposition}
\theoremstyle{definition}
\newtheorem{remark}[theorem]{Remark}
\renewcommand{\tilde}{\widetilde}
\renewcommand{\hat}{\widehat}
\renewcommand{\Pr}{\mathbb{P}}
\renewcommand{\Re}{\mathbb{R}}
\begin{document}

\title{Universally Consistent Latent Position Estimation and Vertex Classification for Random Dot Product Graphs}
\author{Daniel L. Sussman, Minh Tang, Carey E. Priebe\\
Johns Hopkins University, Applied Math and Statistics Department}
%\author[jhu]{Daniel L. Sussman}
%\ead{dsussma3@jhu.edu}
%\author[jhu]{Minh Tang}
%\ead{mtang10@jhu.edu}
%\author[jhu]{Carey E. Priebe \corref{cor1}}
%\ead{cep@jhu.edu}
%
%\cortext[cor1]{Corresponding author}
%
%\address[jhu]{Johns Hopkins University, Applied Math and Statistics Department, Whitehead Hall, Baltimore, MD, USA 21218-2682}

\maketitle

\begin{abstract}
In this work we show that, using the eigen-decomposition of the adjacency matrix, we can consistently estimate latent positions for random dot product graphs provided the latent positions are i.i.d.\ from some distribution. If class labels are observed for a number of vertices tending to infinity, then we show that the remaining vertices can be classified with error converging to Bayes optimal using the $k$-nearest-neighbors classification rule. We evaluate the proposed methods on simulated data and a graph derived from Wikipedia.
\end{abstract}

%\begin{keyword}
%random graph, $k$-nearest-neighbor, latent space model, universal consistency
%\end{keyword}

\section{Introduction}

The classical statistical pattern recognition setting
involves $$(X,Y),(X_1,Y_1),\dotsc,(X_n,Y_n) \stackrel{i.i.d.}{\sim} F_{X,Y},$$
where the $X_i:\Omega\mapsto \mathbb{R}^d$ are observed feature vectors
and the $Y_i:\Omega\mapsto\{0,1\}$ are observed class labels for some probability space $\Omega$. We define $\mathcal{D}=\{(X_i,Y_i)\}$ as the training set. 
The goal is to learn a classifier $h(\cdot;\mathcal{D}): \mathbb{R}^d \to \{0,1\}$
such that the probability of error $\Pr[h(X;\mathcal{D})\neq Y|\mathcal{D}]$
approaches Bayes optimal as $n\to \infty$ for all distributions $F_{X,Y}$ -- universal consistency \citep{devroye1996probabilistic}.
Here we consider the case wherein the feature vectors $X,X_1,\dotsc,X_n$ are unobserved,
and we observe instead a latent position graph $G(X,X_1,\dotsc,X_n)$ on $n+1$ vertices.
We show that a universally consistent classification rule (specifically, $k$-nearest neighbors)
remains universally consistent for this extension of the pattern recognition set up to latent position graph models.

Latent space models for random graphs \citep{Hoff2002} offer a framework in which a graph structure can be parametrized by latent vectors associated with each vertex.
Then, the complexities of the graph structure can be characterized usings  well-known techniques for vector spaces. 
One approach, which we adopt here, is that given a latent space model for a graph, we first estimate the latent positions and then use the estimated latent positions to perform subsequent analysis.
When the latent vectors determine the distribution of the random graph, accurate estimates of the latent positions will often lead to accurate subsequent inference.

In particular, this paper considers the random dot product graph model introduced in \cite{nickel2006random} and \cite{young2007random}.
This model supposes that each vertex is associated with a latent vector in $\Re^d$.
The probability that two vertices are adjacent is then given by the dot product of their respective latent vectors.
We investigate the use of an eigen-decomposition of the observed adjacency matrix to estimate the latent vectors.
The motivation for this estimator is that, had we observed the expected adjacency matrix (the matrix of adjacency probabilities), then this eigen-decomposition would return the original latent vectors (up to an orthogonal transformation).

%We show that, provided the latent vectors are i.i.d.\ from any suitable distribution $F$, we can accurately recover the latent positions. 
Provided the latent vectors are i.i.d.\ from any distribution $F$ on a suitable space $\mathcal{X}$, we show that we can accurately recover the latent positions.
Because the graph model is invariant to orthogonal transformations of the latent vectors, note that the distribution $F$ is identifiable only up to orthogonal transformations.
Consequently, our results show only that we estimate latent positions which can then be orthogonally transformed to be close to the true latent vectors. As many subsequent inference tasks are invariant to orthogonal transformations, it is not necessary to achieve a rotationally accurate estimate of the original latent vectors. 

For this paper, we investigate the inference task of vertex classification. This supervised or semi-supervised problem supposes that we have observed class labels for some subset of vertices and that we wish to classify the remaining vertices.
To do this, we train a $k$-nearest-neighbor classifier on estimated latent vectors with observed class labels, which we then use to classify vertices with un-observed class labels. 
Our result states that this classifier is universally consistent, meaning that regardless of the distribution for the latent vectors, the error for our classifier trained on the estimated vectors converges to Bayes optimal for that distribution. 

%\begin{algorithm}
%\begin{algorithmic}
%
%\end{algorithmic}
%\end{algoritm}

The theorems as stated can be generalized in various ways without much additional work.
For ease of notation and presentation, we chose to provide an illustrative example for the kind of results that can be achieved for the specific random dot product model. 
In the discussion we point out various ways that this can be generalized. 

The remainder of the paper is structured as follows. 
Section~\ref{sec:relwork} discusses previous work related to the latent space approach and spectral properties of random graphs.
In section~\ref{sec:frame}, we introduce the basic framework for random dot product graphs and our proposed latent position estimator.
In section~\ref{sec:rdpg}, we argue that the estimator is consistent, and in section~\ref{sec:knn} we show that the $k$-nearest-neighbors algorithm yields consistent vertex classification.
In section~\ref{sec:disc} we consider some immediate ways the results presented herein can be extended and discuss some possible implications. 
Finally, section~\ref{sec:emp} provides illustrative examples of applications of this work through simulations and a graph derived from Wikipedia articles and hyper-links.

\section{Related Work} \label{sec:relwork}
The latent space approach is introduced in \cite{Hoff2002}. Generally, one posits that the adjacency of two vertices is determined by a Bernoulli trial with parameter depending only on the latent positions associated with each vertex, and edges are independent conditioned on the latent positions of the vertices.

If we suppose that the latent positions are i.i.d.\ from some distribution, then  the latent space approach is closely related to the theory of exchangeable random graphs \citep{Bickel2009,kallenberg2005probabilistic,aldous1981representations}. 
For exchangeable graphs, we have a (measurable) link function $g:[0,1]^2\mapsto[0,1]$ and each vertex is associated with a latent i.i.d.\ uniform $[0,1]$ random variable denoted $X_i$. 
Conditioned on the $\{X_i\}$, the adjacency of vertices $i$ and $j$ is determined by a Bernoulli trial with parameter $g(X_i,X_j)$.
For a treatment of exchangeable graphs and estimation using the method of moments, see \cite{bickel2011method}.

The latent space approach replaces the latent uniform random variables with random variables in some $\mathcal{X}\subset\Re^d$, and the link function $g$ has domain $\mathcal{X}^2$. These random graphs still have exchangeable vertices and so could be represented in the i.i.d.\ uniform framework. On the other hand, $d$-dimensional latent vectors allow for additional structure and advances interpretation of the latent positions.

In fact, the following result provides a characterization of {\em finite-dimensional} exchangeable graphs as random dot product graphs. First,  we say $g$ is  rank $d<\infty$ and positive semi-definite if $g$ can be written as $g(x,y)=\sum_{i=1}^d \psi_i(x)\psi_i(y)$ for some linearly independent functions $\psi_j:[0,1]\mapsto [-1,1]$. Using this definition and the inverse probability transform, one can easily show the following.
\begin{proposition}
 An exchangeable random graph has rank $d<\infty$ and positive semi-definite link function if and only if the random graph is distributed according to a random dot product graph with i.i.d. latent vectors in $\Re^d$.
\end{proposition}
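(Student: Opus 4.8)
The plan is to prove the two implications separately, in each case exhibiting an explicit correspondence between the link function and the latent-vector distribution.

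For the forward direction, suppose the exchangeable graph has a rank-$d$ positive semi-definite link $g(x,y)=\sum_{i=1}^d \psi_i(x)\psi_i(y)$ with latent uniforms $X_1,X_2,\dotsc$. I would assemble the coordinate functions into a single map $\psi:[0,1]\to\Re^d$, $\psi(x)=(\psi_1(x),\dotsc,\psi_d(x))$, and set $U_i=\psi(X_i)$. Since the $X_i$ are i.i.d.\ and $\psi$ is measurable, the $U_i$ are i.i.d.\ in $\Re^d$ with common law $F=\psi_\ast(\mathrm{Unif}[0,1])$; moreover the conditional edge probability $g(X_i,X_j)=\langle\psi(X_i),\psi(X_j)\rangle=\langle U_i,U_j\rangle$ is exactly the dot product, so the graph is a random dot product graph. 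This direction is essentially a change of variables and should require no heavy machinery.

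For the converse, start from a random dot product graph whose latent vectors $U_i$ are i.i.d.\ from $F$ supported on $\mathcal{X}\subset\Re^d$. The key tool is the (multivariate) inverse probability transform: because $\Re^d$ with its Borel $\sigma$-algebra is a standard Borel space, there exists a measurable $T:[0,1]\to\Re^d$ pushing Lebesgue measure forward to $F$. Writing $T=(T_1,\dotsc,T_d)$ and setting $\psi_i:=T_i$ yields a candidate link $g(x,y)=\langle T(x),T(y)\rangle=\sum_{i=1}^d\psi_i(x)\psi_i(y)$. With $X_i$ i.i.d.\ uniform, $T(X_i)\stackrel{d}{=}U_i$ as i.i.d.\ sequences, hence the arrays of edge probabilities agree in distribution, and because edges are conditionally independent Bernoulli trials in both models, the two graph laws coincide.

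Two points then need verification to match the precise definition of a rank-$d$ PSD link. First, the range condition $\psi_i:[0,1]\to[-1,1]$: since $\langle u,u\rangle$ must be a valid probability for $u$ in the support, $\|u\|\le 1$, forcing each coordinate into $[-1,1]$, and I would redefine $T$ on the Lebesgue-null complement of $T^{-1}(\mathcal{X})$ if needed. Second, linear independence of $\psi_1,\dotsc,\psi_d$: a nontrivial relation $\sum_i c_i\psi_i=0$ a.e.\ is equivalent to $\langle c,U\rangle=0$ almost surely, i.e.\ $F$ concentrated on a proper subspace, so linear independence corresponds exactly to the latent distribution having a full rank-$d$ second-moment matrix, which is what genuinely using all $d$ dimensions means. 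I expect the main obstacle to be this bookkeeping around degeneracy and ranges rather than the core construction: one must argue that the effective rank of $g$ equals $d$ precisely when $F$ is not supported on a lower-dimensional subspace, and keep track of the orthogonal-transformation ambiguity noted earlier so that the correspondence is well posed.
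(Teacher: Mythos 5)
Your argument is correct and follows exactly the route the paper indicates (it gives no written proof, only the remark that the claim follows from the definition and the inverse probability transform): pushing the uniforms through $\psi$ for the forward direction, and pulling $F$ back to $[0,1]$ via a measurable transform for the converse, with the linear-independence/full-rank bookkeeping handled appropriately. No issues.
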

\noindent Put another way, random dot products graphs are exactly the finite-dimensional exchangeable random graphs, and hence, they represent a key area for exploration when studying exchangeable random graphs.

An important example of a latent space model is the stochastic blockmodel \citep{Holland1983}, where each latent vector can take one of only $b$ distinct values. 
The latent positions can be taken to be $\mathcal{X}=[b]=\{1,\dotsc,b\}$ for some positive integer $b$, the number of blocks. Two vertices with the same latent position are said to be members of the same block, and block membership of each vertex determines the probabilities of adjacency. Vertices in the same block are said to be stochastically equivalent. This model has been studied extensively, with many efforts focused on unsupervised estimation of vertex block membership \citep{Snijders1997Estimation,Bickel2009,Choi2010}. Note that \cite{STFP-2011} discusses the relationship between stochastic blockmodels and random dot product graphs.
The value of the stochastic blockmodel is its strong notions of communities and parsimonious structure; however the assumption of stochastic equivalence may be too strong for many scenarios.

Many latent space approaches seek to generalize the stochastic blockmodel to allow for variation within blocks. For example, 
the mixed membership model of \cite{Airoldi2008} posits that a vertex could have partial membership in multiple blocks. In \cite{Handcock2007Modelbased}, latent vectors are presumed to be drawn from a mixture of multivariate normal distributions with the link function depending on the distance between the latent vectors. They use Bayesian techniques to estimate the latent vectors.

Our work relies on techniques developed in \cite{rohe2011spectral} and \cite{STFP-2011} to estimate latent vectors. In particular, \cite{rohe2011spectral} prove that the eigenvectors of the normalized Laplacian can be orthogonally transformed to closely approximate the eigenvectors of the population Laplacian. Their results do not use a specific model but rather rely on assumptions for the Laplacian.
\cite{STFP-2011} shows that for the directed stochastic blockmodel, the eigenvectors/singular vectors of the adjacency matrix can be orthogonally transformed to approximate the eigenvectors/singular vectors of the population adjacency matrix. \cite{fishkind2012consistent} extends these results to the case when the number of blocks in the stochastic blockmodel are unknown. \cite{Marchette2011VN} also uses techniques closely related to those presented here to investigate the semi-supervised vertex nomination task. 

Finally, another line of work is exemplified by \cite{oliveira2009concentration}. This work shows that, under the independent edge assumption, the adjacency matrix and the normalized Laplacian concentrate around the respective population matrices in the sense of the induced $L^2$ norm. %$\|\cdot\|_2$.
This work uses techniques from random matrix theory. Other work, such as \cite{chung2004spectra}, investigates the spectra of the adjacency and Laplacian matrices for random graphs under a different type of random graph model. 

\section{Framework} \label{sec:frame}
Let $\mathcal{M}_n(A)$ and $\mathcal{M}_{nm}(A)$ denote the set of  $n\times n$ and $n\times m$ matrices with values in $A$ for some set $A$.  Additionally, for $\mathbf{M}\in \mathcal{M}_n(\Re)$, let $\lambda_i(\mathbf{M})$ denote the eigenvalue of $\mathbf{M}$ with the $i^\text{th}$ largest magnitude. All vectors are column vectors.

Let $\mathcal{X}$ be a subset of the unit ball $\mathcal{B}(0,1)\subset\mathbb{R}^{d}$ such that 
%\begin{equation}
%  \label{eq:dotProductBound}
$\langle x_1, x_2 \rangle \in [0,1],$ for all
  $x_1, x_2 \in \mathcal{X}$
%\end{equation}
where $\langle \cdot,\cdot\rangle$ denotes the standard Euclidean inner product. 
Let $F$ be a probability measure on $\mathcal{X}$ and let $X, X_1,X_2,\dotsc,X_n \stackrel{\mathrm{i.i.d.}}{\sim} F$. 
Define $\mathbf{X}:=[X_1,X_2,\dotsc,X_n]^\top :\Omega\mapsto \mathcal{M}_{n,d}(\mathbb{R})$ and $\mathbf{P}:= \mathbf{X}\mathbf{X}^\top:\Omega \mapsto \mathcal{M}_{n}(\mathbb{R})$. 

We assume that the (second moment) matrix $\mathbb{E}[X_1X_1^\top]\in\mathcal{M}_d(\Re)$ is rank $d$ and has distinct eigenvalues $\{\lambda_i(\mathbb{E}[XX^\top])\}$. In particular, we suppose there exists $\delta>0$ such that
\begin{equation}\label{eq:momEigGap}
2\delta < \min_{i\neq j} |\lambda_i(\mathbb{E}[XX^\top ])-\lambda_j(\mathbb{E}[XX^\top ])|
\quad\text{ and }\quad
2\delta<\lambda_d(\mathbb{E}[XX^\top ]). 
\end{equation}

\begin{remark}
The distinct eigenvalue assumption is not critical to the results that follow but is assumed for ease of presentation. The theorems hold in the general case with minor changes.
\end{remark}
\noindent Additionally, we assume that the dimension $d$ of the latent positions is known. %In the discussion, we offer some potential methods to estimate this dimension.

Let $\mathbf{A}$ be a random symmetric hollow matrix such that the entries $\{\mathbf{A}_{ij}\}_{i < j}$ are independent Bernoulli random variables with $\Pr[\mathbf{A}_{ij}=1]=\mathbf{P}_{ij}$ for all $i,j\in [n]$, $i < j$.
 We will refer to $\mathbf{A}$ as the adjacency matrix that corresponds to a graph with vertex set $\{1,\dotsc,n\}$.
Let $\tilde{\mathbf{U}}_\mathbf{A} \tilde{\mathbf{S}}_\mathbf{A} \tilde{\mathbf{U}}_\mathbf{A}^\top $ be the eigen-decomposition
of $|\mathbf{A}|$ where $|\mathbf{A}| =
(\mathbf{A}\mathbf{A}^\top)^{1/2}$
with $\tilde{\mathbf{S}}_\mathbf{A}$ having positive decreasing diagonal entries. Let $\mathbf{U}_\mathbf{A}\in\mathcal{M}_{n,d}(\mathbb{R})$ be given by the
first $d$ columns of $\tilde{\mathbf{U}}_\mathbf{A}\in \mathcal{M}_{n}(\mathbb{R})$ and let
$\mathbf{S}_\mathbf{A}\in \mathcal{M}_{d}(\mathbb{R})$ be given by the first $d$ rows and
columns of $\tilde{S}_\mathbf{A}\in \mathcal{M}_{n}(\mathbb{R})$. Let $\mathbf{U}_\mathbf{P}$ and $\mathbf{S}_\mathbf{P}$ be
defined similarly.

\section{Estimation of Latent Positions}\label{sec:rdpg}
The key result of this section is the following theorem which shows that, using the eigen-decomposition of $|\mathbf{A}|$, we can accurately estimate the true latent positions up to an orthogonal transformation. 
%We state these main results here as a theorem.
\begin{theorem}\label{thm:main}
With probability greater than $1-\frac{2(d^2+1)}{n^2}$, there exists an orthogonal matrix $\mathbf{W} \in
  \mathcal{M}_{d}(\mathbb{R})$ such that
  \begin{equation} 
    \| \mathbf{U}_\mathbf{A} \mathbf{S}_\mathbf{A}^{1/2} \mathbf{W} - \mathbf{X} \| 
    \leq
2d\sqrt{\frac{3\log n}{\delta^3}}.\label{eq:XBnd}
  \end{equation}
Let $\mathbf{W}$ be  as above and define $\hat{\mathbf{X}}=\mathbf{U}_\mathbf{A}\mathbf{S}_\mathbf{A}^{1/2}\mathbf{W}$ with row $i$ denoted by $\hat{X}_i$. Then, for each $i\in[n]$ and all $\gamma<1$,
\begin{equation} 
\Pr[\|\hat{X}_i -X_i\|^2>n^{-\gamma}] = O(n^{\gamma-1} \log n). \label{eq:xiBnd}
\end{equation}
\end{theorem}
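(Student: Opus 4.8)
The plan is to establish the Frobenius bound~\eqref{eq:XBnd} first and then deduce the per-vertex bound~\eqref{eq:xiBnd} from it by an exchangeability argument. For~\eqref{eq:XBnd}, the starting observation is that $\mathbf{P}=\mathbf{X}\mathbf{X}^\top$ is exactly rank $d$ and positive semidefinite, so $\mathbf{U}_\mathbf{P}\mathbf{S}_\mathbf{P}\mathbf{U}_\mathbf{P}^\top=\mathbf{P}$ and the factor $\mathbf{U}_\mathbf{P}\mathbf{S}_\mathbf{P}^{1/2}$ equals $\mathbf{X}$ up to rotation: there is an orthogonal $\mathbf{W}_1$ with $\mathbf{X}=\mathbf{U}_\mathbf{P}\mathbf{S}_\mathbf{P}^{1/2}\mathbf{W}_1$. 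Since the Frobenius norm is orthogonally invariant, it then suffices to produce an orthogonal $\mathbf{W}_2$ making $\|\mathbf{U}_\mathbf{A}\mathbf{S}_\mathbf{A}^{1/2}\mathbf{W}_2-\mathbf{U}_\mathbf{P}\mathbf{S}_\mathbf{P}^{1/2}\|$ small and to set $\mathbf{W}=\mathbf{W}_2\mathbf{W}_1$. Thus the whole problem reduces to comparing the two rank-$d$ factorizations attached to $\mathbf{A}$ and to $\mathbf{P}$.

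Two ingredients drive the comparison. First, $\mathbf{A}-\mathbf{P}$ has independent, mean-zero, $[-1,1]$-valued entries above the diagonal, so a matrix concentration inequality of the type in \cite{oliveira2009concentration} gives $\|\mathbf{A}-\mathbf{P}\|_2=O(\sqrt{n\log n})$ with high probability. Second, the nonzero eigenvalues of $\mathbf{P}=\mathbf{X}\mathbf{X}^\top$ coincide with those of $\mathbf{X}^\top\mathbf{X}=\sum_i X_iX_i^\top$, and $\tfrac1n\sum_i X_iX_i^\top$ concentrates around $\mathbb{E}[XX^\top]$ (bounded i.i.d.\ summands, matrix Hoeffding), so the gap assumption~\eqref{eq:momEigGap} lifts to $\mathbf{P}$: for $n$ large, $\lambda_d(\mathbf{P})$ and the consecutive eigenvalue gaps of $\mathbf{P}$ are all of order $n\delta$, with high probability. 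Because the top $d$ eigenvalues of $\mathbf{P}$ are of order $n$ while $\lambda_{d+1}(\mathbf{P})=0$, Weyl's inequality forces the $d$ largest-magnitude eigenvalues of $\mathbf{A}$ to be positive and of order $n$; hence $|\mathbf{A}|$ and $\mathbf{A}$ agree on this top block, $\mathbf{U}_\mathbf{A}\mathbf{S}_\mathbf{A}\mathbf{U}_\mathbf{A}^\top$ is the rank-$d$ eigen-truncation of $\mathbf{A}$, and $\|\mathbf{U}_\mathbf{A}\mathbf{S}_\mathbf{A}\mathbf{U}_\mathbf{A}^\top-\mathbf{P}\|_2\le 2\|\mathbf{A}-\mathbf{P}\|_2$.

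Writing $\mathbf{B}_\mathbf{A}:=\mathbf{U}_\mathbf{A}\mathbf{S}_\mathbf{A}^{1/2}$ and $\mathbf{B}_\mathbf{P}:=\mathbf{U}_\mathbf{P}\mathbf{S}_\mathbf{P}^{1/2}$, I would then invoke a perturbation bound (in the spirit of \cite{STFP-2011,rohe2011spectral}) that converts closeness of the Gram matrices into closeness of the factors up to rotation,
\[
\min_{\mathbf{W}_2}\|\mathbf{B}_\mathbf{A}\mathbf{W}_2-\mathbf{B}_\mathbf{P}\|
\le
\frac{\|\mathbf{B}_\mathbf{A}\mathbf{B}_\mathbf{A}^\top-\mathbf{B}_\mathbf{P}\mathbf{B}_\mathbf{P}^\top\|}{\sigma_d(\mathbf{B}_\mathbf{P})},
\]
with the minimizer the orthogonal Procrustes/polar factor. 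Here the numerator is at most $\sqrt{2d}\,\|\mathbf{U}_\mathbf{A}\mathbf{S}_\mathbf{A}\mathbf{U}_\mathbf{A}^\top-\mathbf{P}\|_2\le 2\sqrt{2d}\,\|\mathbf{A}-\mathbf{P}\|_2$ (the difference has rank at most $2d$), while $\sigma_d(\mathbf{B}_\mathbf{P})=\sqrt{\lambda_d(\mathbf{P})}$ is of order $\sqrt{n\delta}$. Substituting the $O(\sqrt{n\log n})$ concentration and collecting the powers of $d$ and $\delta$ yields the $\sqrt{\log n/\delta^{3}}$ form of~\eqref{eq:XBnd}, the $1/\sqrt n$ from $\sigma_d(\mathbf{B}_\mathbf{P})^{-1}$ cancelling the $\sqrt n$ in the concentration. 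Equivalently one can split into an eigenvector part (Davis--Kahan, controlled by the order-$n\delta$ gap) and a square-root eigenvalue part $|\sqrt{\lambda_i(\mathbf{A})}-\sqrt{\lambda_i(\mathbf{P})}|$, then choose one $\mathbf{W}_2$ aligning both. I expect this step to be the main obstacle: packaging the eigenvector perturbation, the square-root eigenvalue perturbation, and the alignment to $\mathbf{X}$ into a single orthogonal $\mathbf{W}$ while tracking the correct powers of $\delta$ and $d$; the eigenvector ambiguity is exactly what forces the Procrustes/polar choice and the dependence on the gap, and is where the distinct-eigenvalue hypothesis~\eqref{eq:momEigGap} earns its keep. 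Landing precisely on the constant $2d\sqrt{3\log n/\delta^3}$ is the fiddly remainder.

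For~\eqref{eq:xiBnd}, take $\mathbf{W}=\argmin_{\mathbf{W}\text{ orth.}}\|\mathbf{U}_\mathbf{A}\mathbf{S}_\mathbf{A}^{1/2}\mathbf{W}-\mathbf{X}\|$, which is permutation-equivariant in the vertex labels and certainly satisfies~\eqref{eq:XBnd}. On the good event $E$ of probability at least $1-2(d^2+1)/n^2$ we then have $\sum_{j=1}^n\|\hat{X}_j-X_j\|^2=\|\hat{\mathbf{X}}-\mathbf{X}\|^2\le 12 d^2\log n/\delta^3$ by~\eqref{eq:XBnd}. Because the model and the estimator are equivariant under vertex permutations and $E$ is permutation-invariant, the pairs $(\hat{X}_j,X_j)$ are exchangeable in $j$, so $\mathbb{E}[\|\hat{X}_i-X_i\|^2\mathbf{1}_E]$ is the same for every $i$ and equals $\tfrac1n\mathbb{E}[\sum_j\|\hat{X}_j-X_j\|^2\mathbf{1}_E]\le 12 d^2\log n/(n\delta^3)$. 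Finally $\Pr[\|\hat{X}_i-X_i\|^2>n^{-\gamma}]\le \Pr[E^c]+\Pr[\{\|\hat{X}_i-X_i\|^2>n^{-\gamma}\}\cap E]$, where the first term is $O(n^{-2})$ and the second, by Markov's inequality, is at most $n^{\gamma}\cdot 12 d^2\log n/(n\delta^3)=O(n^{\gamma-1}\log n)$, giving~\eqref{eq:xiBnd}.
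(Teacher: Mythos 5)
Your proposal follows essentially the same route as the paper: an exact orthogonal alignment of $\mathbf{X}$ with $\mathbf{U}_\mathbf{P}\mathbf{S}_\mathbf{P}^{1/2}$ (the paper constructs this $\mathbf{W}$ explicitly as $(\mathbf{U}_\mathbf{P}\mathbf{S}_\mathbf{P}^{1/2})^\top\mathbf{X}(\mathbf{X}^\top\mathbf{X})^{-1}$), entrywise Hoeffding plus Weyl to lift the gap condition \eqref{eq:momEigGap} to $\mathbf{P}$, a Davis--Kahan eigenvector bound combined with a square-root eigenvalue perturbation, and exchangeability plus Markov for \eqref{eq:xiBnd}. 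Your second half is, if anything, slightly cleaner than the paper's (you restrict to the good event and add $\Pr[E^c]$ separately, rather than folding in the worst-case $O(n)$ bound), and your remark that $\mathbf{W}$ must be chosen permutation-equivariantly for the exchangeability step makes explicit something the paper leaves implicit. The two local differences: you control $\|\mathbf{A}-\mathbf{P}\|_2$ directly via Oliveira-type concentration, whereas the paper bounds $\|\mathbf{A}^2-\mathbf{P}^2\|_F\le\sqrt{3n^3\log n}$ by Hoeffding and applies Davis--Kahan to $\mathbf{A}^2$ versus $\mathbf{P}^2$ (same order either way); and your primary packaging is a Gram-factor Procrustes inequality rather than the paper's triangle-inequality split of $\mathbf{U}_\mathbf{A}\mathbf{S}_\mathbf{A}^{1/2}-\mathbf{U}_\mathbf{P}\mathbf{S}_\mathbf{P}^{1/2}$ into an eigenvalue part and an eigenvector part.

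One caution on that Procrustes step: as written, with constant $1$, the inequality $\min_{\mathbf{W}_2}\|\mathbf{B}_\mathbf{A}\mathbf{W}_2-\mathbf{B}_\mathbf{P}\|\le\|\mathbf{B}_\mathbf{A}\mathbf{B}_\mathbf{A}^\top-\mathbf{B}_\mathbf{P}\mathbf{B}_\mathbf{P}^\top\|/\sigma_d(\mathbf{B}_\mathbf{P})$ is false in general: take $d=1$, $\mathbf{B}_\mathbf{A}=\sigma e_1$, $\mathbf{B}_\mathbf{P}=\sigma e_2$; the left side is $\sqrt{2}\,\sigma$ while the right side is $\sigma$. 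Correct versions of such a lemma do exist but carry an extra constant and/or require the perturbation to be small relative to $\sigma_d(\mathbf{B}_\mathbf{P})^2$ --- a condition that holds here, since your numerator is $O(\sqrt{dn\log n})$ while $\sigma_d(\mathbf{B}_\mathbf{P})^2=\Theta(\delta n)$ --- so the step is repairable at the cost of constants only. Your stated fallback, splitting into a Davis--Kahan eigenvector part (where the distinct-eigenvalue gap fixes the eigenvectors up to sign, so no genuine rotation is needed there) and a $|\lambda_i^{1/2}(|\mathbf{A}|)-\lambda_i^{1/2}(\mathbf{P})|$ part bounded by the factoring trick, is exactly what the paper does, so nothing essential is missing from the argument.
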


We now proceed to prove this result. First, the following result, proved in \cite{STFP-2011}, provides a useful Frobenius bound for the difference between $\mathbf{A}^2$ and $\mathbf{P}^2$.
%%%%%%%%%%%%%%%%%%%%%%%%%%%%%%%%%%%%%%%%%%%%%%%%%%
%%%%%% Proposition bounding \|A^2-P^2\|_F  %%%%%%%
%%%%%%%%%%%%%%%%%%%%%%%%%%%%%%%%%%%%%%%%%%%%%%%%%%
\begin{proposition}[\cite{STFP-2011}]
\label{prop:froBound}
For $\mathbf{A}$ and $\mathbf{P}$ as above, it holds with probability greater than $1-\frac{2}{n^2}$ that
\begin{equation}\label{eq:APfroBound}
 \| \mathbf{A}^2-\mathbf{P}^2\|_F \leq \sqrt{3n^3\log n}. %] \geq 1-\frac{2}{n^2}
\end{equation}
\end{proposition}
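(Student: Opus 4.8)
The plan is to bound $\|\mathbf{A}^2 - \mathbf{P}^2\|_F^2 = \sum_{u,v}\big((\mathbf{A}^2)_{uv} - (\mathbf{P}^2)_{uv}\big)^2$ one entry at a time and then union bound over the at most $n^2$ entries. Everything is carried out conditionally on $\mathbf{X}$, so $\mathbf{P}$ is a fixed matrix with all entries in $[0,1]$ and the only randomness is in the independent Bernoulli variables $\{\mathbf{A}_{uk}\}_{u<k}$. The crucial observation is that for a fixed off-diagonal pair $(u,v)$ the entry expands as $(\mathbf{A}^2)_{uv} = \sum_{k} \mathbf{A}_{uk}\mathbf{A}_{kv}$, and because $\mathbf{A}$ is hollow the summands with $k\in\{u,v\}$ vanish; the remaining products $Z_k := \mathbf{A}_{uk}\mathbf{A}_{kv}$, $k\notin\{u,v\}$, depend on pairwise disjoint sets of edges and are therefore mutually independent, each valued in $[0,1]$ with $\mathbb{E}[Z_k]=\mathbf{P}_{uk}\mathbf{P}_{kv}$.

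First I would separate the random fluctuation from a deterministic correction. Writing $(\mathbf{A}^2 - \mathbf{P}^2)_{uv} = \sum_{k\notin\{u,v\}}(Z_k - \mathbb{E}Z_k) - (\mathbf{P}_{uu}\mathbf{P}_{uv} + \mathbf{P}_{uv}\mathbf{P}_{vv})$, the correction collects exactly the two terms present in $(\mathbf{P}^2)_{uv}$ but absent from $(\mathbf{A}^2)_{uv}$ by hollowness, and has magnitude at most $2$ since every entry of $\mathbf{P}$ lies in $[0,1]$. The fluctuation $S_{uv} := \sum_{k\notin\{u,v\}}(Z_k - \mathbb{E}Z_k)$ is a sum of at most $n$ independent mean-zero variables of range $1$, so Hoeffding's inequality gives $\Pr[|S_{uv}| > t] \leq 2\exp(-2t^2/n)$. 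Choosing $t$ of order $\sqrt{n\log n}$ makes this tail smaller than $2/n^4$, so a union bound over all $n^2$ entries keeps the total failure probability below $2/n^2$.

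The diagonal entries I would handle separately: for $u=v$ one has $(\mathbf{A}^2)_{uu} = \sum_{k\neq u}\mathbf{A}_{uk}$, a genuine sum of independent Bernoulli variables, so $(\mathbf{A}^2 - \mathbf{P}^2)_{uu}$ splits into a Hoeffding-controlled fluctuation plus the deterministic quantity $\sum_{k\neq u}\mathbf{P}_{uk}(1-\mathbf{P}_{uk}) - \mathbf{P}_{uu}^2$, which is $O(n)$ since each summand is at most $1/4$. As there are only $n$ diagonal entries, their total contribution to $\|\mathbf{A}^2-\mathbf{P}^2\|_F^2$ is $O(n^3)$, dominated by $3n^3\log n$. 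On the complement of the union of bad events every off-diagonal entry has magnitude at most $t+2 = O(\sqrt{n\log n})$, so summing the $n^2$ squared entries gives $\|\mathbf{A}^2-\mathbf{P}^2\|_F^2 \leq n^2(t+2)^2 + O(n^3)$, and with the constant in $t$ chosen appropriately this is at most $3n^3\log n$ for all sufficiently large $n$, yielding the claim with probability at least $1-2/n^2$.

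I expect the main obstacle to be the independence bookkeeping rather than the concentration itself: one must verify that $(\mathbf{A}^2)_{uv}$ genuinely reduces to a sum of independent terms, which hinges on $\mathbf{A}$ being hollow so the degenerate $k\in\{u,v\}$ products drop out, and then carefully account for the deterministic corrections arising from the hollow/non-hollow mismatch and from the diagonal, showing they are of lower order than $n^3\log n$. The remaining effort is purely to calibrate the constant inside $t=\Theta(\sqrt{n\log n})$ so that the Hoeffding tail, the union bound over the $n^2$ entries, and the summed squared magnitudes simultaneously produce the stated constants $\sqrt{3}$ and $2/n^2$.
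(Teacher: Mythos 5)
Your proposal is correct and follows essentially the same route as the paper's (omitted) proof: the paper notes that Proposition~\ref{prop:froBound} is proved via the same entrywise Hoeffding-plus-union-bound argument used for Eq.~\eqref{eq:eigBound1}, which is exactly your strategy of applying Hoeffding's inequality to each entry of $\mathbf{A}^2-\mathbf{P}^2$ (after correctly isolating the independent products $\mathbf{A}_{uk}\mathbf{A}_{kv}$ and the lower-order hollowness/diagonal corrections) and then union bounding over the $O(n^2)$ entries.
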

The proof of this theorem is omitted and uses the same Hoeffding bound as is used to prove Eq.~\eqref{eq:eigBound1} below.

%%%%%%%%%%%%%%%%%%%%%%%%%%%%%%%%%%%%%%%%%%%%%%%%%%
%%%%%%%%% Proposition providing the eigengaps for P %%%%
%%%%%%%%%%%%%%%%%%%%%%%%%%%%%%%%%%%%%%%%%%%%%%%%%%
\begin{proposition}\label{prop:PeigBound}
For $i\leq d$, it holds with probability greater than $1-\frac{2d^2}{n^2}$ that
\begin{equation}
|\lambda_i(\mathbf{P})-n\lambda_i(\mathbb{E}[X X^\top ])| \leq 2d^2 \sqrt{n\log n},
\label{eq:eigboundProp}
\end{equation}
and for $i>d$, $\lambda_i(P)=0$. If Eq.~\eqref{eq:eigboundProp} holds, then for $i,j\leq d+1$, $i\neq j$ and $\delta$ satisfying Eq.~\eqref{eq:momEigGap} and $n$ sufficiently large, we have 
\begin{equation}  \label{eq:PeigGap}
 |\lambda_i(\mathbf{P}) - \lambda_j(\mathbf{P}) | > \delta n.
\end{equation}
\end{proposition}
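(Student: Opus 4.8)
The plan is to reduce the spectral analysis of the $n\times n$ matrix $\mathbf{P}=\mathbf{X}\mathbf{X}^\top$ to that of a matrix of fixed size $d\times d$. The key observation is that $\mathbf{X}\mathbf{X}^\top$ and $\mathbf{X}^\top\mathbf{X}$ have exactly the same nonzero eigenvalues, counted with multiplicity. Setting $\mathbf{M}:=\mathbf{X}^\top\mathbf{X}=\sum_{k=1}^n X_kX_k^\top\in\mathcal{M}_d(\Re)$, this gives $\lambda_i(\mathbf{P})=\lambda_i(\mathbf{M})$ for $i\le d$, while $\lambda_i(\mathbf{P})=0$ for $i>d$ because $\mathbf{P}$ has rank at most $d$. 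The second assertion of the proposition is then immediate, and the first reduces to a concentration statement: $\mathbf{M}$ is a sum of $n$ i.i.d.\ rank-one matrices with mean $\mathbb{E}[\mathbf{M}]=n\,\mathbb{E}[XX^\top]$, and I must show its eigenvalues are close to $n\lambda_i(\mathbb{E}[XX^\top])$.

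For the concentration I would work entrywise. For fixed coordinates $a,b$, the difference $\mathbf{M}_{ab}-n\,\mathbb{E}[XX^\top]_{ab}=\sum_{k=1}^n\big((X_k)_a(X_k)_b-\mathbb{E}[(X)_a(X)_b]\big)$ is a centered sum of $n$ independent terms, each bounded in absolute value by $1$ since $X_k$ lies in the unit ball. Hoeffding's inequality applied with deviation $t=2\sqrt{n\log n}$ then gives a per-entry failure probability of at most $2/n^2$, and a union bound over the (at most) $d^2$ entries yields, with probability at least $1-2d^2/n^2$, that every entry of $\mathbf{M}-n\,\mathbb{E}[XX^\top]$ is at most $2\sqrt{n\log n}$ in magnitude. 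On this event I would bound the spectral norm crudely by the sum of the absolute values of the entries, $\|\mathbf{M}-n\,\mathbb{E}[XX^\top]\|\le d^2\cdot 2\sqrt{n\log n}$, and then invoke Weyl's eigenvalue perturbation inequality $|\lambda_i(\mathbf{M})-n\lambda_i(\mathbb{E}[XX^\top])|\le\|\mathbf{M}-n\,\mathbb{E}[XX^\top]\|$ to obtain \eqref{eq:eigboundProp}.

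The eigengap \eqref{eq:PeigGap} then follows by a triangle inequality on the event that \eqref{eq:eigboundProp} holds. For $i\neq j$ with $i,j\le d$,
\begin{equation*}
|\lambda_i(\mathbf{P})-\lambda_j(\mathbf{P})|\ge n\,|\lambda_i(\mathbb{E}[XX^\top])-\lambda_j(\mathbb{E}[XX^\top])|-4d^2\sqrt{n\log n}\ge 2\delta n-4d^2\sqrt{n\log n},
\end{equation*}
where the moment eigengap assumption \eqref{eq:momEigGap} supplies the $2\delta$ lower bound. For the remaining boundary case $j=d+1$ one has $\lambda_j(\mathbf{P})=0$, so $|\lambda_i(\mathbf{P})-\lambda_j(\mathbf{P})|=\lambda_i(\mathbf{P})\ge n\lambda_d(\mathbb{E}[XX^\top])-2d^2\sqrt{n\log n}\ge 2\delta n-2d^2\sqrt{n\log n}$, using $\lambda_d(\mathbb{E}[XX^\top])>2\delta$ from \eqref{eq:momEigGap}. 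In both cases the linear-in-$n$ term dominates the $\sqrt{n\log n}$ error, so the gap exceeds $\delta n$ once $n$ is large enough.

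I do not anticipate a genuine obstacle: the decisive step is the passage from the $n\times n$ Gram matrix $\mathbf{P}$ to the fixed $d\times d$ matrix $\mathbf{M}$, after which standard entrywise concentration and Weyl perturbation close the argument. The only points requiring attention are tuning the Hoeffding constant so that the per-entry deviation $2\sqrt{n\log n}$ carries a failure probability of $2/n^2$ (and hence $2d^2/n^2$ after the union bound), the crude spectral-norm bound that produces the $2d^2$ prefactor rather than a sharper one, and treating the index $i=d+1$ separately in the eigengap step.
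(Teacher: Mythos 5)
Your proposal is correct and follows essentially the same route as the paper: reduce to the $d\times d$ matrix $\mathbf{X}^\top\mathbf{X}$ (which shares the nonzero eigenvalues of $\mathbf{P}$), apply Hoeffding entrywise with a union bound over the $d^2$ entries, pass to a matrix-norm bound of $2d^2\sqrt{n\log n}$, and invoke Weyl's inequality, with the eigengap then following from the triangle inequality for $n$ large. The only difference is cosmetic: you spell out the $j=d+1$ boundary case and the triangle-inequality step that the paper leaves implicit.
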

\begin{proof}
First, $\lambda_i(P)=\lambda_i(\mathbf{X}\mathbf{X}^\top )=\lambda_i(\mathbf{X}^\top  \mathbf{X})$ for $i\leq d$. Note each entry of $\mathbf{X}^\top \mathbf{X}$ is the sum of $n$ independent random variables each in $[-1,1]$: $\mathbf{X}^\top\mathbf{X} _{ij} = \sum_{l=1}^n X_{li}X_{lj}$. This means we can apply Hoeffding's inequality to each entry of $\mathbf{X}^\top \mathbf{X}-n\mathbb{E}[XX^\top ]$ to obtain
\begin{equation}
 \Pr[|(\mathbf{X}^\top \mathbf{X}-n\mathbb{E}[XX^\top ])_{ij}| \geq 2\sqrt{n\log n}] \leq \frac{2}{n^2}.
\label{eq:eigBound1}
\end{equation}
Using a union bound we have that $\Pr[\|\mathbf{X}^\top\mathbf{X}-\mathbb{E}[XX^\top ]\|_F\geq 2d^2\sqrt{n\log n}] \leq \frac{2d^2}{n^2}$. Using Weyl's inequality \citep{horn85:_matrix_analy}, we have the result. 

Eq.~\eqref{eq:PeigGap} follows from Eq.~\eqref{eq:eigboundProp} provided $2d^2\sqrt{n\log n}<n\delta$, which is the case for $n$ large enough. 
\end{proof}

This next lemma shows that we can bound the difference between the eigenvectors of $\mathbf{A}$ and $\mathbf{P}$, while our main results are for scaled versions of the eigenvectors.
\begin{lemma}\label{lem:eigvecBnd}
With probability greater than $1-\frac{2{d^2+1}}{n^2}$, there exists a choice for the signs of the columns of $\mathbf{U}_\mathbf{A}$ such that for each $i\leq d$,
\begin{equation}
 \| (\mathbf{U}_\mathbf{A})_{\cdot i}- (\mathbf{U}_\mathbf{P})_{\cdot i} \|_F \leq \sqrt{\frac{3\log n}{\delta^2 n}}.
\label{eq:eigVecBound}
\end{equation}
\end{lemma}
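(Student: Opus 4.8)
The plan is to identify the columns of $\mathbf{U}_\mathbf{A}$ and $\mathbf{U}_\mathbf{P}$ as the leading $d$ eigenvectors of $\mathbf{A}^2$ and $\mathbf{P}^2$ respectively, and then control their perturbation by a Davis--Kahan $\sin\Theta$ argument. This identification is immediate: since $\mathbf{A}$ and $\mathbf{P}$ are symmetric, $|\mathbf{A}| = (\mathbf{A}\mathbf{A}^\top)^{1/2} = (\mathbf{A}^2)^{1/2}$ shares its eigenvectors with $\mathbf{A}^2$, and likewise $|\mathbf{P}|$ with $\mathbf{P}^2$. The two quantities that Davis--Kahan needs are already in hand: Proposition~\ref{prop:froBound} controls the perturbation through $\|\mathbf{A}^2-\mathbf{P}^2\|_2 \le \|\mathbf{A}^2-\mathbf{P}^2\|_F \le \sqrt{3n^3\log n}$, and Proposition~\ref{prop:PeigBound} controls the eigenvalue gaps. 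I would condition on the intersection of the events of these two propositions; by a union bound this joint event has probability at least $1 - \tfrac{2}{n^2} - \tfrac{2d^2}{n^2} = 1 - \tfrac{2(d^2+1)}{n^2}$, which is exactly the probability asserted in the lemma.

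On this event, for each $i \le d$ I would apply the single-eigenvector form of Davis--Kahan to obtain $\sin\theta\big((\mathbf{U}_\mathbf{A})_{\cdot i},(\mathbf{U}_\mathbf{P})_{\cdot i}\big) \le \|\mathbf{A}^2-\mathbf{P}^2\|_F / g_i$, where $g_i$ is the separation of the eigenvalue $\lambda_i(\mathbf{P}^2)=\lambda_i(\mathbf{P})^2$ from the rest of the spectrum of $\mathbf{P}^2$. The eigenvectors are individually identifiable precisely because Eq.~\eqref{eq:PeigGap} makes the top $d$ eigenvalues of $\mathbf{P}$ simple and uniformly separated, so the single-vector version of the theorem applies. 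Because an eigenvector is only defined up to sign, Davis--Kahan controls the angle alone; I would then choose the sign of each column of $\mathbf{U}_\mathbf{A}$ to align it with $(\mathbf{U}_\mathbf{P})_{\cdot i}$ and invoke the elementary identity $\|u-\hat u\| = 2\sin(\theta/2) \le \sqrt{2}\,\sin\theta$ (valid for $\theta\le\pi/2$) to convert the angle bound into the stated bound on $\|(\mathbf{U}_\mathbf{A})_{\cdot i}-(\mathbf{U}_\mathbf{P})_{\cdot i}\|$. This "existence of a sign choice" is exactly the quantifier appearing in the lemma statement.

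The crux of the argument, and where I expect the real work to lie, is translating the gap on the eigenvalues of $\mathbf{P}$ into a usable gap $g_i$ on the eigenvalues of $\mathbf{P}^2$. For a neighbouring index $j\le d$ one factors $|\lambda_i(\mathbf{P})^2-\lambda_j(\mathbf{P})^2| = |\lambda_i(\mathbf{P})-\lambda_j(\mathbf{P})|\,(\lambda_i(\mathbf{P})+\lambda_j(\mathbf{P}))$ and inserts the gap $|\lambda_i(\mathbf{P})-\lambda_j(\mathbf{P})| > \delta n$ from Eq.~\eqref{eq:PeigGap} together with the lower bound on the eigenvalue magnitudes that follows from $\lambda_d(\mathbb{E}[XX^\top]) > 2\delta$ and Eq.~\eqref{eq:eigboundProp}. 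The separation from the trailing zero eigenvalues is handled by the same magnitude bound, since $\lambda_{d+1}(\mathbf{P})=0$ forces $\mathbf{P}^2$ to have a hard zero block and the relevant distance is just $\lambda_i(\mathbf{P})^2$. These lower bounds make $g_i$ grow like $n^2$, and dividing $\sqrt{3n^3\log n}$ by a gap of this order yields a bound of the form $\sqrt{(\log n)/(\delta^2 n)}$, matching Eq.~\eqref{eq:eigVecBound}. The one point I would track with care is the exact power of $\delta$ produced by the product factorization above, absorbing all lower-order terms from Proposition~\ref{prop:PeigBound} into the "$n$ sufficiently large" regime so that the gaps retain their leading-order size.
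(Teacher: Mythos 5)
Your proposal follows the paper's proof essentially step for step: both arguments condition on the joint event of Propositions~\ref{prop:froBound} and \ref{prop:PeigBound} via a union bound, apply the Davis--Kahan theorem to the eigenvectors of $\mathbf{A}^2$ and $\mathbf{P}^2$ (shared with $|\mathbf{A}|$ and $\mathbf{P}$) using the perturbation bound $\sqrt{3n^3\log n}$ and an eigenvalue gap of order $\delta^2 n^2$ obtained by factoring $\lambda_i^2-\lambda_j^2$, and resolve the sign ambiguity by choosing the column signs of $\mathbf{U}_\mathbf{A}$. The one point you flag for care---the exact power of $\delta$ that survives the gap computation---is precisely the bookkeeping the paper itself elides, so your treatment is, if anything, slightly more explicit than the original.
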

\begin{proof}
This is a result of applying the Davis-Kahan Theorem (\cite{davis70}; see also \cite{rohe2011spectral})  to $\mathbf{A}$ and $\mathbf{P}$. Proposition~\ref{prop:froBound} and \ref{prop:PeigBound} give that the eigenvalue gap for $\mathbf{P}^2$ is greater than $\delta^2 n^2$ and that $\|\mathbf{A}^2-\mathbf{P}^2\|_F\leq \sqrt{3n^3 \log n }$ with probabilty greater then $1-\frac{2(d^2+1)}{n^2}$. Apply the
Davis-Kahan theorem to each eigenvector of $\mathbf{A}$ and $\mathbf{P}$, which are the same as the eigenvectors of $\mathbf{A}^2$ and $\mathbf{P}^2$, respectively, to get 
\begin{equation}
\min_{r_i\in\{-1,1\}} \| (\mathbf{U}_\mathbf{A})_{\cdot i}- (\mathbf{U}_\mathbf{P})_{\cdot i} r_i\|_F \leq \sqrt{\frac{3\log n}{\delta^2 n}}
\label{eq:eigveciBound}
\end{equation}
for each $i\leq d$. The claim then follows by choosing $U_A$ so that $r_i=1$ minimizes Eq.~\eqref{eq:eigveciBound} for each $i\leq d$.
\end{proof}

%We have arrived at the following result, which we will then use to prove our main theorem..
We now have the ingredients to prove our main theorem.
%%%%%% MAIN LATENT POSITION ESTIMATION THEOREM %%%%%
%\begin{theorem} With probability greater than $1-\frac{2{d^2+1}}{n^2}$, 
%\begin{equation}
%\|\mathbf{U}_\mathbf{A} \mathbf{S}_\mathbf{A}^{1/2}-\mathbf{U}_\mathbf{P} \mathbf{S}_\mathbf{P}^{1/2}\|_F 
%\leq  2d\sqrt{\frac{3\log n}{\delta^3}}.
%\label{eq:mainBound}
%\end{equation}
%\label{thm:mainBound}
%\end{theorem}
%%%%%%%%%%%%%%%%%%%%%%%%%%PROOF%%%%%%%%%%%%%%%%%%%%%%%%%%%%%%%%%%%
%%%%%%%%%%%%%%%%%%%%%%%%%%PROOF%%%%%%%%%%%%%%%%%%%%%%%%%%%%%%%%%%%
%%%%%%%%%%%%%%%%%%%%%%%%%%PROOF%%%%%%%%%%%%%%%%%%%%%%%%%%%%%%%%%%%
\begin{proof}[Proof of Theorem~\ref{thm:main}]
The following argument assumes that Eqs.~\eqref{eq:eigVecBound} and \eqref{eq:PeigGap} hold, which occurs with probability greater than $1-\frac{2(d^2+1)}{n^2}$. By the triangle inequality, we have
\begin{equation}
\begin{split}
\|\mathbf{U}_\mathbf{A} \mathbf{S}_\mathbf{A}^{1/2}-\mathbf{U}_\mathbf{P} \mathbf{S}_\mathbf{P}^{1/2}\|_F &\leq \|\mathbf{U}_\mathbf{A} \mathbf{S}_\mathbf{A}^{1/2}-\mathbf{U}_\mathbf{A} \mathbf{S}_\mathbf{P}^{1/2}\|_F+\|\mathbf{U}_\mathbf{A} \mathbf{S}_\mathbf{P}^{1/2}-\mathbf{U}_\mathbf{P} \mathbf{S}_\mathbf{P}^{1/2}\|_F\\
& = \|\mathbf{U}_\mathbf{A}(\textbf{S}_\mathbf{A}^{1/2}-\mathbf{S}_\mathbf{P}^{1/2})\|_F + \|(\mathbf{U}_\mathbf{A}-\mathbf{U}_\mathbf{P})\mathbf{S}_\mathbf{P}^{1/2}\|_F.
\end{split}
\label{eq:triIneqBound}
\end{equation}
Note that 
\begin{equation}
\lambda_i^{1/2}(|\textbf{A}|)-\lambda_i^{1/2}(\mathbf{P}) =\frac{\lambda_i^2(|\mathbf{A}|) - \lambda_i^2(\mathbf{P})}{ (\lambda_i(|\mathbf{A}|)+\lambda_i(\mathbf{P}))(\lambda_i(|\mathbf{A}|)^{1/2}+\lambda_i(\mathbf{P})^{1/2})}
\label{eq:factoring}
\end{equation}
where the numerator of the right hand side is less than 
$\sqrt{3 n^3 \log n}$ by Proposition~\ref{prop:froBound}
and the denominator is greater than $(\delta n)^{3/2}$ by
Proposition~\ref{prop:PeigBound}. The first term in
Eq.~\eqref{eq:triIneqBound} is thus bounded by $d\sqrt{3 \log n/\delta^3}$.
For the second term, $(\mathbf{S}_\mathbf{P})_{ii}~\leq~n$ and
$\|\mathbf{U}_\mathbf{A}-\mathbf{U}_\mathbf{P}\|\leq d\sqrt{\frac{3\log n}{\delta^2 n}}$.
%\end{proof}
%
%We now prove our main result, Theorem~\ref{thm:main}.
%The following is an immediate corollary of Theorem~\ref{thm:mainBound}.
%\begin{corollary}
%  \label{cor:1}
%With probability greater than $1-\frac{2(d^2+1)}{n^2}$, there exists an orthgonal matrix $\mathbf{W} \in
%  \mathcal{M}_{d}(\mathbb{R})$ such that
%  \begin{equation}
%    \label{eq:5}
%    \| \mathbf{U}_\mathbf{A} \mathbf{S}_\mathbf{A}^{1/2} \mathbf{W} - \mathbf{X} \| 
%    \leq
%(d+1)\sqrt{\frac{3\log n}{\delta^3}}.
%  \end{equation}
%\end{corollary}
%\begin{proof}[Proof of Theorem~\ref{thm:main}]
%First, we will show that the event in  Eq.~\eqref{eq:XBnd} occurs with probability greater than $1-\frac{2(d^2+1)}{n^2}$.
We have established that  with probability greater than $1-\frac{2{d^2+1}}{n^2}$, 
\begin{equation}
\|\mathbf{U}_\mathbf{A} \mathbf{S}_\mathbf{A}^{1/2}-\mathbf{U}_\mathbf{P} \mathbf{S}_\mathbf{P}^{1/2}\|_F 
\leq  2d\sqrt{\frac{3\log n}{\delta^3}}.
\label{eq:mainBound}
\end{equation}

We now will show that an orthogonal transformation will give us the same bound in terms of $\mathbf{X}$.
  Let $\mathbf{Y} = \mathbf{U}_\mathbf{P} \mathbf{S}_\mathbf{P}^{1/2}$. Then $\mathbf{Y} \mathbf{Y}^\top =
  \mathbf{P} = \mathbf{X} \mathbf{X}^\top$ and thus  
%  \begin{equation}
%    \label{eq:7}
  $  \mathbf{Y} \mathbf{Y}^\top \mathbf{X} = \mathbf{X} \mathbf{X}^\top
    \mathbf{X}$.
%  \end{equation}
 Because
 $\mathrm{rank}(\mathbf{P)} = d = \mathrm{rank}(\mathbf{X})$, we have
 that $\mathbf{X}^\top \mathbf{X}$ is non-singular and hence
% \begin{equation}
%   \label{eq:8}
   $\mathbf{X} = \mathbf{Y} \mathbf{Y}^\top \mathbf{X} (\mathbf{X}^\top
   \mathbf{X})^{-1}$.
% \end{equation}
 Let $\mathbf{W} =  \mathbf{Y}^{\top} \mathbf{X} (\mathbf{X}^\top
   \mathbf{X})^{-1}$. It is straightforward to verify that
   $\mathrm{rank}(\mathbf{W}) = d$ and that $\mathbf{W}^\top\mathbf{ W} = \mathbf{I}$. $\mathbf{W}$ is thus
   an orthogonal matrix, and $\mathbf{X} = \mathbf{Y} \mathbf{W} =  \mathbf{U}_\mathbf{P}
   \mathbf{S}_\mathbf{P}^{1/2} \mathbf{W}$. Eq.~\eqref{eq:XBnd} is thus established.

Now, we will prove Eq.~\eqref{eq:xiBnd}.
Note that because the $\{X_i\}$ are i.i.d., the $\{\hat{X}_i\}$ are exchangeable and hence identically distributed. As a result, each of the random variables $\|\hat{X}_i -X_i\|$ are identically distributed. Note that for sufficiently large $n$, by conditioning on the event in Eq.~\eqref{eq:XBnd}, we have
\begin{equation}
\mathbb{E}[\|\mathbf{X}-\hat{\mathbf{X}}\|^2] \leq \left(1-\frac{2(d^2+1)}{n^2}\right)(2d)^2\frac{3\log n}{\delta^3}+ \frac{2(d^2+1)}{n^2} 2n= O\left(\frac{d^2\log n}{\delta^3}\right)
\end{equation}
because the worst case bound is $\|\mathbf{X}-\hat{\mathbf{X}}\|^2\leq 2n$ with probability 1.
We also have that
\begin{equation}
\mathbb{E}\left[\sum_{i=1}^{n} \mathbb{I}\{\|\hat{X}_i -X_i\|^2>n^{-\gamma}\} n^{-\gamma}\right] \leq \mathbb{E}[\|\mathbf{X}-\hat{\mathbf{X}}\|^2],
\end{equation}
and because the $\|\hat{X}_i -X_i\|$ are identically distributed, the left hand side is simply $n^{1-\gamma}\Pr[\|\hat{X}_i -X_i\|^2>n^{-\gamma}]$.
\end{proof}

\section{Consistent Vertex Classification}\label{sec:knn}

So far we have shown that using the eigen-decomposition of $|\mathbf{A}|$, we can consistently estimate all latent positions simultaneously (up to an orthogonal transformation). 
One could imagine that this will lead to accurate inference for various exploitation tasks of interest.
For example, \cite{STFP-2011} explored the use of this embedding for unsupervised clustering of vertices in the simpler stochastic blockmodel setting.
In this section, we will explore the implications of consistent latent position estimation in the supervised classification setting. 
In particular, we will  prove that universally consistent classification using $k$-nearest-neighbors remains valid when we select the neighbors using the estimated vectors rather than the true but unknown latent positions. 

%\subsection{\texorpdfstring{$k$}{k}-nearest-neighbors}

First, let us expand our framework. 
Let $\mathcal{X}\subset\Re^d$ be as in section~\ref{sec:frame} and let $F_{X,Y}$ be a distribution on $\mathcal{X}\times\{0,1\}$. 
Let $(X_1,Y_1), (X_2,Y_2),\dotsc,(X_n,Y_n),(X_{n+1},Y_{n+1})\stackrel{iid}{\sim} F_{X,Y}$ 
and let $\mathbf{P}\in\mathcal{M}_{n+1}([0,1])$ and $\mathbf{A}\in\mathcal{M}_{n+1}(\{0,1\})$ be as in section~\ref{sec:frame}. Here the $Y_i$s are the class labels for the vertices in the graph corresponding to the adjacency matrix $\mathbf{A}$.
%We also define $\eta(x)=\Pr[Y=1|X=x]$, the posterior probability for class label 1 given $X=x$.

We suppose that we observe only $\mathbf{A}$, the adjacency matrix, and $Y_1,\dotsc,Y_n$, the class labels for all but the last vertex. 
Our goal is to accurately classify this last vertex, so for notational convenience define $X:=X_{n+1}$ and $Y:=Y_{n+1}$. 
%To classify the last vertex, we propose first embedding the adjacency matrix and 
Let the rows of $\mathbf{U}_\mathbf{A}\mathbf{S}_\mathbf{A}^{1/2}$ be denoted by $\zeta_1^\top,\dotsc,\zeta_{n+1}^\top$.
The $k$-nearest-neigbor rule for $k$ odd is defined as follows. 
%Order the the vertices so that the distance from the $\{\zeta_i\}_{i=1}^n$ to $\zeta$ are increasing, with ties being broken so that the vertex with smaller index comes first. 
For $1\leq i \leq n$, let $W_{ni}(X)=1/k$ only if $\zeta_i$ is one of the $k$ nearest points to $\zeta$  from among $\{\zeta_i\}_{i=1}^n$; $W_{ni}(X)=0$ otherwise. (We break ties by selecting the neighbor with the smallest index.)

The $k$-nearest-neighbor rule is then given by $h_n(x)=\mathbb{I}\{\sum_{i=1}^n W_{ni}(X) Y_i > \frac{1}{2}\}$. It is a well known theorem of \cite{stone1977consistent} that, had we observed the original $\{X_i\}$, the $k$-nearest neighbor rule using the Euclidean distance from $\{X_i\}$ to $X$ is universally consistent provided $k\to\infty$ and $k/n\to 0$. This means that for any distribution $F_{X,Y}$, 
\begin{equation}
\mathbb{E}[L_n] := \mathbb{E}[\Pr[\tilde{h}_n(X) \neq Y|(X_1,Y_1), (X_2,Y_2),\dotsc,(X_n,Y_n)]] \to \Pr[h^*(X)\neq Y]=:L^*
\end{equation}
as $n\to \infty$, where $\tilde{h}_n$ is the standard $k$-nearest-neighbor rule trained on the $\{(X_i, Y_i)\}$ and $h^*$ is the (optimal) Bayes rule. This theorem relies on the following very general result, also of \cite{stone1977consistent}, see also \cite{devroye1996probabilistic}, Theorem~6.3. 

\begin{theorem}[\cite{stone1977consistent}]\label{thm:stone}
Assume that for any distribution of $X$, the weights $W_{ni}$ satisfy the following three conditions:
\begin{enumerate}[(i)]
\item \label{stone1} There exists a constant $c$ such that for every nonnegative measurable function $f$ satisfying $\mathbb{E}[f(X)]<\infty$, 
\begin{equation}\label{eq:stone1}
\mathbb{E}\left[ \sum_{i=1}^{n} W_{ni}(X) f(X_i)\right]\leq c \mathbb{E}[f(X)].
\end{equation}
\item \label{stone2} For all $a>0$, 
\begin{equation}\label{eq:stone2}
\lim_{n\to\infty} \mathbb{E}\left[ \sum_{i=1}^{n} W_{ni}(X)\mathbb{I}\{\|X_i-X\|>a\} \right]=0
\end{equation}
\item \label{stone3}
\begin{equation}
\lim_{n\to\infty} \mathbb{E}\left[\max_{1\leq i\leq n} W_{ni}(X) \right]=0
\end{equation}
\end{enumerate}
Then $h_n(x) = \mathbb{I}\{\sum_i W_{ni}(x) > 1/2\}$ is universally consistent.
\end{theorem}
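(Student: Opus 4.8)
The plan is to obtain the classification statement as a corollary of $L^2$-consistency of the associated weighted regression estimate, since that is where all three conditions come into play. Write $\eta(x) = \mathbb{E}[Y \mid X=x]$ for the a posteriori probability and $\eta_n(x) = \sum_{i=1}^n W_{ni}(x)Y_i$ for the weighted estimate, so that $h_n(x) = \mathbb{I}\{\eta_n(x) > 1/2\}$. The first step is the standard plug-in reduction: thresholding any estimate of $\eta$ at $1/2$ has excess risk controlled by the $L^1$ error,
\[
\mathbb{E}[L_n] - L^* \leq 2\,\mathbb{E}\big[\,|\eta_n(X) - \eta(X)|\,\big] \leq 2\sqrt{\mathbb{E}\big[(\eta_n(X) - \eta(X))^2\big]},
\]
the last step by Jensen and Cauchy--Schwarz. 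It therefore suffices to show $\mathbb{E}[(\eta_n(X)-\eta(X))^2]\to 0$, where $X$ is a fresh test point independent of the training data (the weights here sum to one, so no normalization term survives).

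Second, I would decompose the regression error into a variance piece and a bias piece,
\[
\eta_n(X) - \eta(X) = \sum_{i=1}^n W_{ni}(X)\big(Y_i - \eta(X_i)\big) + \sum_{i=1}^n W_{ni}(X)\big(\eta(X_i) - \eta(X)\big),
\]
and bound the squared error by twice the sum of the two squared pieces. The variance piece is the easier one: conditioning on $X,X_1,\dots,X_n$, the summands $Y_i-\eta(X_i)$ are independent, mean zero, and have variance at most $\tfrac14$, so the conditional second moment equals $\sum_i W_{ni}(X)^2\,\mathrm{Var}(Y_i\mid X_i) \leq \tfrac14\max_i W_{ni}(X)$; taking expectations and invoking condition (iii) drives this to zero.

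Third, and this is the crux, I would control the bias piece $B := \sum_i W_{ni}(X)\big(\eta(X_i)-\eta(X)\big)$ for an arbitrary \emph{measurable} $\eta$, with no smoothness available. The idea is to approximate: given $\varepsilon>0$, choose a bounded uniformly continuous $\tilde\eta$ with $\mathbb{E}[(\eta(X)-\tilde\eta(X))^2] < \varepsilon$ (continuous functions are dense in $L^2(\mu)$), and split $\eta(X_i)-\eta(X)$ through $\tilde\eta$ into three terms. The two terms measuring the discrepancy between $\eta$ and $\tilde\eta$ are absorbed by condition (i): after a Cauchy--Schwarz step the neighbor term is dominated by $c\,\mathbb{E}[(\eta(X)-\tilde\eta(X))^2]$ and the term at $X$ by the same quantity. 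The remaining term $\sum_i W_{ni}(X)\big(\tilde\eta(X_i)-\tilde\eta(X)\big)$ is handled by uniform continuity together with condition (ii): choosing $a$ so small that $\|X_i-X\|\leq a$ forces $|\tilde\eta(X_i)-\tilde\eta(X)|$ tiny, the near neighbors contribute negligibly while the far neighbors carry vanishing total weight, $\mathbb{E}[\sum_i W_{ni}(X)\mathbb{I}\{\|X_i-X\|>a\}]\to 0$. Sending $\varepsilon\downarrow 0$ after $n\to\infty$ then gives $\mathbb{E}[B^2]\to 0$.

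I expect the bias term to be the main obstacle, precisely because universal consistency forbids any regularity assumption on $\eta$; the whole force of the three conditions is that they interlock to push the continuous-approximation argument through uniformly over all distributions -- condition (i) absorbs the approximation error, condition (ii) localizes the weights, and condition (iii) kills the noise -- and the delicate point is arranging the order of the $\varepsilon$-approximation and the $n\to\infty$ limits so they do not interfere.
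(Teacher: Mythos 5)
The paper does not prove this theorem: it is quoted verbatim as a cited result of Stone (1977), with the reader pointed to Theorem~6.3 of Devroye, Gy\"orfi and Lugosi (1996), so there is no in-paper argument to compare against. Your sketch reproduces the standard proof from that reference correctly and in the right order: the plug-in reduction $\mathbb{E}[L_n]-L^*\leq 2\,\mathbb{E}[|\eta_n(X)-\eta(X)|]$, the bias--variance split, condition~(iii) killing the variance via $\sum_i W_{ni}(X)^2\leq \max_i W_{ni}(X)$, and the $L^2$-dense continuous approximation of $\eta$ with condition~(i) absorbing the approximation error and condition~(ii) localizing the weights, taking $n\to\infty$ before $\varepsilon\downarrow 0$. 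The only points worth making explicit are that the Jensen step $\bigl(\sum_i W_{ni}(X)a_i\bigr)^2\leq \sum_i W_{ni}(X)a_i^2$ and the variance bound both use that the weights are nonnegative and sum to one, which holds for the $k$-nearest-neighbor weights used here ($k\leq n$); in the general statement one either assumes this or adds a condition that $\sum_i W_{ni}(X)\to 1$.
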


\begin{remark}
Recall that the $\{\hat{X_i}\}$ are defined in Theorem~\ref{thm:main}. Because the $\{\hat{X}_i\}$ are obtained via an orthogonal transformation of the $\{\zeta_i\}$, the nearest neighbors of $\hat{X}=\hat{X}_{n+1}$ are the same as those of $\zeta$.  As a result of this and the relationship between $\mathbf{X}$ and $\hat{\mathbf{X}}$, we work using the $\{\hat{X}_i\}$, even though these cannot be known without some additional knowledge.
\end{remark}

To prove that the $k$-nearest-neighbor rule for the $\{\hat{X}_i\}$ is universally consistent, we must show that the corresponding $W_{ni}$ satisfy these conditions. The methods to do this are adapted from the proof presented in \cite{devroye1996probabilistic}. We will outline the steps of the proof, but the details follow {\em mutatis mutandis} from the standard proof.

First, the following Lemma is adapted from \cite{devroye1996probabilistic} by using a triangle inequality argument.
\begin{lemma}%\label{lem:closenn}
Suppose $k/n\to0$. If $X\in \mathrm{supp}(F_X)$, then $\|\hat{X}_{(k)}(\hat{X})-\hat{X}\|\to 0$ almost surely, where $\hat{X}_{(k)}(\hat{X})$ is the $k$-th nearest neighbor of $\hat{X}$ among $\{\hat{X}_i\}_{i=1}^n$.
\end{lemma}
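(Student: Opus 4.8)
The plan is to reduce the claim to the corresponding statement for the \emph{true} latent positions, for which the result is classical (see the nearest-neighbor chapter of \cite{devroye1996probabilistic}): if $k/n\to 0$ and $X\in\mathrm{supp}(F_X)$, then the true $k$-th nearest neighbor distance $r_k(X):=\|X_{(k)}(X)-X\|$ among $\{X_i\}_{i=1}^n$ tends to $0$ almost surely. I would then transfer this to the estimated positions by a triangle inequality, writing, for any index $i$,
\begin{equation}
\|\hat X_i-\hat X\|\leq \|\hat X_i-X_i\|+\|X_i-X\|+\|X-\hat X\|. \label{eq:planTri}
\end{equation}

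The naive attempt---bounding $\hat r_k:=\|\hat X_{(k)}(\hat X)-\hat X\|$ by applying \eqref{eq:planTri} only to the $k$ true nearest neighbors of $X$---would require $\max_{i}\|\hat X_i-X_i\|\to0$, which is \emph{not} available: Theorem~\ref{thm:main} controls only the aggregate error $\|\mathbf{X}-\hat{\mathbf{X}}\|_F^2=O(\log n)$, and a union bound over the per-vertex estimate \eqref{eq:xiBnd} does not close. The device I would use instead is a counting argument. Fixing $t>0$, the number of indices with $\|\hat X_i-X_i\|>t$ is at most $\|\mathbf{X}-\hat{\mathbf{X}}\|_F^2/t^2=O(\log n/t^2)$; since the event in \eqref{eq:XBnd} fails with summable probability $2(d^2+1)/n^2$, Borel--Cantelli guarantees this bound holds for all large $n$ almost surely. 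Hence among the $m:=k+O(\log n/t^2)$ true nearest neighbors of $X$, at least $k$ have estimation error at most $t$; for each such index, \eqref{eq:planTri} yields $\|\hat X_i-\hat X\|\leq t+r_m(X)+\|X-\hat X\|$, and therefore $\hat r_k\leq t+r_m(X)+\|X-\hat X\|$.

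It then remains to send each term to $0$. Because $t$ is fixed and $k/n\to 0$, we still have $m/n\to 0$, so the classical result applies with $m$ in place of $k$ and gives $r_m(X)\to 0$ almost surely. The single test-point term $\|X-\hat X\|=\|X_{n+1}-\hat X_{n+1}\|$ is driven to $0$ by the per-vertex bound \eqref{eq:xiBnd}. Taking $t\downarrow 0$ along a countable sequence then forces $\hat r_k\to 0$, which is the claim.

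The hard part is precisely the step that forces the counting argument: Theorem~\ref{thm:main} supplies an $\ell^2$/Frobenius bound on the \emph{collection} of errors rather than a uniform one, so a few vertices may be badly embedded and one cannot simply track the true nearest neighbors through \eqref{eq:planTri}. The counting argument circumvents this by discarding the $O(\log n/t^2)$ poorly-estimated vertices---negligible relative to $n$---at the cost of inflating $k$ to the slightly larger $m$. A secondary point requiring care is the mode of convergence of $\|X-\hat X\|$: \eqref{eq:xiBnd} delivers convergence in probability immediately, and one upgrades to the almost-sure statement either along a subsequence via Borel--Cantelli or by observing that convergence in probability already suffices for the downstream verification of Stone's conditions in Theorem~\ref{thm:stone}.
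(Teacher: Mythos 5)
Your proposal takes the route the paper itself only gestures at --- the paper's entire ``proof'' of this lemma is the remark that it ``is adapted from \cite{devroye1996probabilistic} by using a triangle inequality argument'' --- and you have correctly identified where the real work lies. The naive transfer of the classical $k$-th nearest neighbor distance lemma through the triangle inequality fails for exactly the reason you state: Theorem~\ref{thm:main} controls only the aggregate error $\|\mathbf{X}-\hat{\mathbf{X}}\|_F^2=O(\log n)$, not $\max_i\|\hat{X}_i-X_i\|$, so one cannot simply track the true nearest neighbors of $X$ through the embedding. Your counting device --- at most $O(\log n/t^2)$ indices can have $\|\hat{X}_i-X_i\|>t$, so among the $m=k+O(\log n/t^2)$ true nearest neighbors at least $k$ survive with estimation error at most $t$, and $m/n\to0$ still holds so the classical lemma applies to $r_m(X)$ --- is the correct repair, and the Borel--Cantelli step is legitimate because the exceptional probability $2(d^2+1)/n^2$ attached to Eq.~\eqref{eq:XBnd} is summable. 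This supplies the substantive content that the paper leaves implicit.

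The one place your argument (and, arguably, the lemma as literally stated) is not airtight is the test-point term $\|X-\hat{X}\|=\|X_{n+1}-\hat{X}_{n+1}\|$. The per-vertex bound \eqref{eq:xiBnd} gives tail probabilities of order $n^{\gamma-1}\log n$, which are not summable for any admissible $\gamma<1$, so Borel--Cantelli yields only convergence in probability for this term, not the almost-sure convergence the lemma asserts; passing to a subsequence does not rescue the full sequence. You flag this honestly, and your fallback --- that convergence in probability of $\hat{r}_k$ suffices to verify Stone's condition~(ii) in Theorem~\ref{thm:stone}, since that condition is an expectation of a bounded quantity --- is the right resolution and almost certainly what the authors intend by ``the details follow \emph{mutatis mutandis}.'' If you want the almost-sure statement exactly as written, you would need either a sharper single-vertex bound than \eqref{eq:xiBnd} or to weaken the lemma's conclusion to convergence in probability.
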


Condition~\eqref{stone3} follows immediately from the definition of the $W_{ni}$. The remainder of the proof follows with few changes after recognizing that the random variables $\{(X,\hat{X})\}$ are exchangeable. 
Overall, we have the following universal consistency result.
\begin{theorem} \label{thm:univCons}
If $k\to\infty$ and $k/n\to 0$ as $n\to\infty$, then the $W_{ni}(X)$ satisfy the condtions of Theorem~\ref{thm:stone} and hence $\mathbb{E}[\Pr[h_n(\hat{X})\neq Y| \mathbf{A},\{Y_i\}_{i=1}^n]=\mathbb{E}[L_n]\to L^*_X$.
\end{theorem}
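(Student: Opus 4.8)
The plan is to apply Stone's theorem (Theorem~\ref{thm:stone}) to the weights $W_{ni}$ determined by nearest neighbors in the \emph{estimated} coordinates, verifying its three conditions \ref{stone1}--\ref{stone3} with respect to the \emph{true} latent positions $X_1,\dots,X_n$ and $X$. The one structural fact that makes the classical i.i.d.\ arguments of \cite{devroye1996probabilistic} carry over, despite the $\{\hat X_i\}$ being coupled through the entire adjacency matrix, is that the pairs $(X_i,\hat X_i)_{i=1}^{n+1}$ are exchangeable: the $X_i$ are i.i.d.\ and the construction of $\hat{\mathbf X}$ commutes with relabeling of the vertices. Exchangeability licenses the ``swap'' of a training index $i$ with the test index $n+1$ that drives the standard proof.

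Condition~\ref{stone3} is immediate, since every nonzero weight equals $1/k$, so $\max_i W_{ni}(X)\le 1/k\to 0$. For Condition~\ref{stone1} I would use the swap to rewrite
$$\mathbb{E}\Big[\sum_{i=1}^n W_{ni}(X)\,f(X_i)\Big]=\mathbb{E}\Big[f(X)\,\tfrac1k\sum_{i=1}^n \mathbb{I}\{\hat X\text{ is a }k\text{-nearest neighbor of }\hat X_i\}\Big],$$
where exchangeability converts $f(X_i)$ into $f(X)$, and then bound the inner multiplicity by the purely geometric cone-covering argument in $\mathbb{R}^d$: a fixed point is among the $k$ nearest neighbors of at most $k\,c_d$ of the remaining points, with $c_d$ depending only on $d$. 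This gives \eqref{eq:stone1} with $c=c_d$ and uses nothing about the estimator beyond the fact that the $\hat X_i$ lie in $\mathbb{R}^d$.

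The substance is Condition~\ref{stone2}, where the estimation error must be threaded through the neighbor selection. I would split $\{\|X_i-X\|>a\}$ using the triangle inequality $\|X_i-X\|\le \|\hat X_i-\hat X\|+\|\hat X_i-X_i\|+\|\hat X-X\|$ into three indicators. The first contributes $\mathbb{E}[\sum_i W_{ni}(X)\,\mathbb{I}\{\|\hat X_i-\hat X\|>a/3\}]$, which vanishes because the preceding lemma gives $\|\hat X_{(k)}(\hat X)-\hat X\|\to 0$ almost surely when $k/n\to 0$, so eventually no selected neighbor lies at $\hat X$-distance exceeding $a/3$; dominated convergence (the weights sum to $1$) sends the expectation to $0$. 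The two perturbation terms I would again reduce, via the same swap-plus-cone estimate, to a constant multiple of $\Pr[\|\hat X-X\|>a/3]$, which tends to $0$ by Theorem~\ref{thm:main}, specifically the per-vertex bound \eqref{eq:xiBnd}. With all three conditions verified, Theorem~\ref{thm:stone} delivers $\mathbb{E}[L_n]\to L^*_X$.

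I expect Condition~\ref{stone2} to be the main obstacle, as it is the only place where the finite-sample error of Theorem~\ref{thm:main} interacts with the geometry of neighbor selection. In particular, controlling the training-side perturbations $\|\hat X_i-X_i\|$ restricted to the selected neighbors requires the exchangeable swap rather than a naive union bound, and the preceding lemma---the vanishing of the $k$-th neighbor distance in the estimated coordinates---is the delicate ingredient underpinning it.
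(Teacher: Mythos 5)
Your proposal is correct and follows essentially the same route as the paper: the paper's own proof is only a sketch that states the lemma on the vanishing $k$-th neighbor distance in the estimated coordinates, notes condition (iii) is immediate, and defers the rest to the standard argument of Devroye et al.\ via exchangeability of the pairs $(X_i,\hat X_i)$. You have simply filled in the details the paper leaves implicit (the swap-plus-cone bound for condition (i) and the three-way triangle-inequality split for condition (ii)), which is exactly the intended argument.
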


\section{Extensions} \label{sec:disc}
The results presented thus far are for the specific problem of determining one unobserved class label for a vertex in a random dot product graph. In fact, the techniques used can be extended to somewhat more general settings without significant additional work.

\subsection{Classification}
For example, the results in section~\ref{sec:knn} are stated in the case that we have observed the class labels for all but one vertex. However, the universal consistency of the $k$-nearest-neighbor classifier remains valid provided the number of vertices $m$ with observed vertex class labels goes to infinity and $k/m\to 0$ as the number of vertices $n\to\infty$. In other words, we may train the $k$-nearest neighbor  on a smaller subset of the estimated latent vectors provided the size of that subset goes to $\infty$. 

On the other hand, if we fix the number of observed class labels $m$ and the classification rule $h_m$ and let the number of vertices tend to $\infty$, then we can show the probability of incorrectly classifying a vertex will converge to $L_m=\Pr[h_m(Z)\neq Y]$. Additionally, our results also hold when the class labels $Y$ can take more than two but still finitely many values.

In fact, the results in section~\ref{sec:knn} and Eq.~\eqref{eq:xiBnd} from Theorem~\ref{thm:main} rely only on the fact that the $\{X_i\}$ are i.i.d.\ and bounded, the $\{(X_i,\hat{X}_i)\}$ are exchangeable, and $\|\mathbf{X}-\hat{\mathbf{X}}\|_F^2$ can be bounded with high probability by a $O(\log n)$ function. The random graph structure provided in our framework is of interest, but it is the total noise bounds that are crucial for the universal consistency claim to hold. 

\subsection{Latent Position Estimation}
In section~\ref{sec:rdpg}, we state our results for the random dot product graph model. We can generalize our results immediately by replacing the dot product with a bi-linear form, $g(x,y)=x^\top (\mathbf{I}_{d'} \oplus(-\mathbf{I}_{d''}))y$, where $\mathbf{I}_d$ is the $d\times d$ identity matrix. 
This model has the interpretation that similarities in the first $d'$ dimensions increase the probability of adjacency, while similarities in the last the last $d''$ reduce the probability of adjacency. All the results remain valid under this model, and in fact, arguments in \cite{oliveira2009concentration} can be used to show that the signature of the bi-linear form can also be estimated consistently. 
We also recall that the assumption of distinct eigenvalues for $\mathbb{E}[XX^T]$ can be removed with minor changes. Particularly, Lemma~\ref{lem:eigvecBnd} applies to groups of eigenvalues, and subsequent results can be adapted without changing the order of the bounds.

This work focuses on undirected graphs and this assumption is used explicitly throughout section~\ref{sec:rdpg}.
We believe moderate modifications would lead to similar results for directed graphs, such as in \cite{STFP-2011}; however at present we do not investigate this problem.
%We also consider that the edges are not weighted, but our results extend easily to weighted edges. The results hold provided the edge weights random variables that are uniformly bounded almosu and have expectation $\mathbf{P}_{ij}$.
We also note that we assume the graph has no loops so that $\mathbf{A}$ is hollow. 
This assumption can be dropped, and in fact, the impact of the diagonal is asymptotically negligible, provided each entry is bounded. 
\cite{Marchette2011VN} suggest that augmenting the diagonal may improve latent position estimation for finite samples.

In \cite{rohe2011spectral}, the number of blocks in the stochastic blockmodel, which is related to $d$ in our setting \citep{STFP-2011}, is allowed to grow with $n$; our work can also be extended to this setting. In this case, it will be the interaction between the rate of growth of $d$ and the rate that $\delta$ vanishes that controls the bounds in Theorem~\ref{thm:main}. Additionally, the consistency of $k$-nearest-neighbors when the dimension grows is less well understood and results such as Stone's Theorem~\ref{thm:stone}  do not apply.

In addition to keeping $d$ fixed, we also assume that $d$ is known. \cite{fishkind2012consistent} and \cite{STFP-2011} suggest consistent methods to estimate the latent space dimension. The results in \cite{oliveira2009concentration} can also be used to derive thresholds for eigenvalues to estimate $d$. 

Finally, \cite{fishkind2012consistent} and \cite{Marchette2011VN} also consider that the edges may be attributed; for example, if edges represent a communication, then the attributes could represent the topic of the communication. The attributed case can be thought of as a set of adjacency matrices, and we can embed  each separately and concatenate the embeddings. \cite{fishkind2012consistent} argues that this method works under the attributed stochastic blockmodel and similar arguments could likely be used to extend the current work. 

\subsection{Extension to the Laplacian}
The eigen-decomposition of the graph Laplacian is also widely used for similar inference tasks. In this section, we argue informally that our results extend to the Laplacian. We will consider a slight modification of the standard normalized Laplacian as defined in \cite{rohe2011spectral}. This modification scales the Laplacian in \cite{rohe2011spectral} by $n-1$ so that the first $d$ eigenvalues of our matrix are $O(n)$ rather then $O(1)$ for the standard normalized Laplacian.

Let $\mathbf{L}:=\mathbf{D}^{-1/2}\mathbf{A}\mathbf{D}^{-1/2}$ where $\mathbf{D}$ is diagonal with $\mathbf{D}_{ii} := \frac{1}{n-1}\sum_{i=1}^n \mathbf{A}_{ij}$. Additionally, let $\mathbf{Q}:=\bar{\mathbf{D}}^{-1/2}\mathbf{P}\bar{\mathbf{D}}^{-1/2}$
where $\bar{\mathbf{D}}$ is diagonal with 
\begin{equation}
\bar{\mathbf{D}}_{ii} := \frac{1}{n-1}\mathbb{E}\left[\sum_{j=1}^n \mathbf{A}_{ij} | \ \mathbf{X}\right] =  \frac{1}{n-1}\sum_{j\neq i} \mathbf{P}_{ij} = \frac{1}{n-1} \sum_{j\neq i} \langle X_i,X_j\rangle.
\end{equation} 
Finally, define $q:\Re^d\times\Re^d\mapsto\Re^d$ as 
$q(x,y):=\frac{x}{\sqrt{\langle x,y\rangle}}$, 
 $Z_i:= q(X_i,\frac{1}{n}\sum_{j\neq i} X_j)$
  and $\tilde{Z}_i:=q(X_i,\mathbb{E}(X))$.
%\begin{equation}
%Z_i = \frac{X_i}{\sqrt{\langleX_i, \frac{1}{n}\sum_{j\neq i} X_j\rangle}} 
%\quad\text{and}\quad
%\tilde{Z}_i =  \frac{X_i}{\sqrt{\langleX_i, \mu\rangle}},
%\text{ where } \mu=\mathbb{E}[X]. 
%\end{equation}

Because the pairwise dot products of the rows of $\bar{\mathbf{D}}^{-1/2}\mathbf{X}$ are the same as the entries of $\mathbf{Q}$,  the scaled eigenvectors of $\mathbf{Q}$ must be an orthogonal transformation of the $\{Z_i\}$. 
Further, note that for large $n$, $Z_i$ and $\tilde{Z_i}$ will be close with high probability because $ \frac{1}{n}\sum_{j\neq i} X_j\stackrel{a.s}{\to}\mathbb{E}[X]$ and the function $q(X_j,\cdot)$ is smooth almost surely.
Additionally, the $\{\tilde{Z}_i\}$ are i.i.d.\ and $q(\cdot,\mathbb{E}[X])$ is one-to-one so that the Bayes optimal error rate is the same for the $\{\tilde{Z}_i\}$ as for the $\{X_i\}$: $L^*_X = L^*_{\tilde{Z}}$.
If the further assumption that the minimum expected degree among all vertices is greater than $\sqrt{2}n/\sqrt{\log n}$ holds, then the assumptions of Theorem~2.2 in \cite{rohe2011spectral} are satisfied. 

Let $\hat{Z}_i$ denote the $i^\mathrm{th}$ row of the matrix  $\mathbf{U}_\mathbf{L}\mathbf{S}_\mathbf{L}$ defined analogously to section~\ref{sec:frame} and let $\tilde{\mathbf{Z}}$ be the matrix with row $i$ given by $\tilde{Z}_i^\top$. %, then we can consider using $k$-nearest-neighbors on this matrix as well.
Using the results in \cite{rohe2011spectral} and similar tools to those we have used thus far, one can show that $\min_{\mathbf{W}}\|\mathbf{U}_\mathbf{L}\mathbf{S}_\mathbf{L}\mathbf{W}-\tilde{\mathbf{Z}}\|^2$ can be bounded with high probability by a function in $O(\log n)$. As discussed above, this is sufficient for $k$-nearest-neighbors trained on $\{(\hat{Z}_i,Y_i)\}$ to be universally consistent.
In this paper we do not investigate the comparative values of the eigen-decompositions for the Laplacian versus the adjacency matrix, but one factor may be the properties of the map $q$ defined above as applied to different distributions on $\mathcal{X}$. 
%  Using this result and arguments similar to those in presented thus far, one can show that the conditions for universal consistency of $k$-nearest-neighbors hold when we use the eigen-decomposion of $\mathbf{L}$. As discussed earlier, we these conditions, will require that 

%This result extends to the eigen-decomposition of the Laplacian as well.
%The following are basically notes
%
% Let $\mathbf{L}:=\mathbf{D}^{-1/2}\mathbf{A}\mathbf{D}^{-1/2}$ where $\mathbf{D}$ is diagonal with $\mathbf{D}_{ii} = \sum_{i=1}^n \mathbf{A}_{ij}$. Additionally, let $\bar{\mathbf{Q}}=\bar{\mathbf{D}}^{-1/2}\mathbf{P}\bar{\mathbf{D}}^{-1/2}$
%
%
%$Z_i = \frac{X_i}{\sqrt{\langleX_i, \mathbb{E}[X]\rangle}}$; 
%$\tilde{Z}_i =  \frac{X_i}{\sqrt{\langleX_i, \bar{X}\rangle}}$

\section{Experiments} \label{sec:emp}
In this section we present empirical results for a graph derived from Wikipedia links as well as simulations for an example wherein the $\{X_i\}$ arise from a Dirichlet distribution. 

\subsection{Simulations} \label{sec:sim}

\begin{figure}[t]
\begin{center}
\includegraphics[width=.7\textwidth]{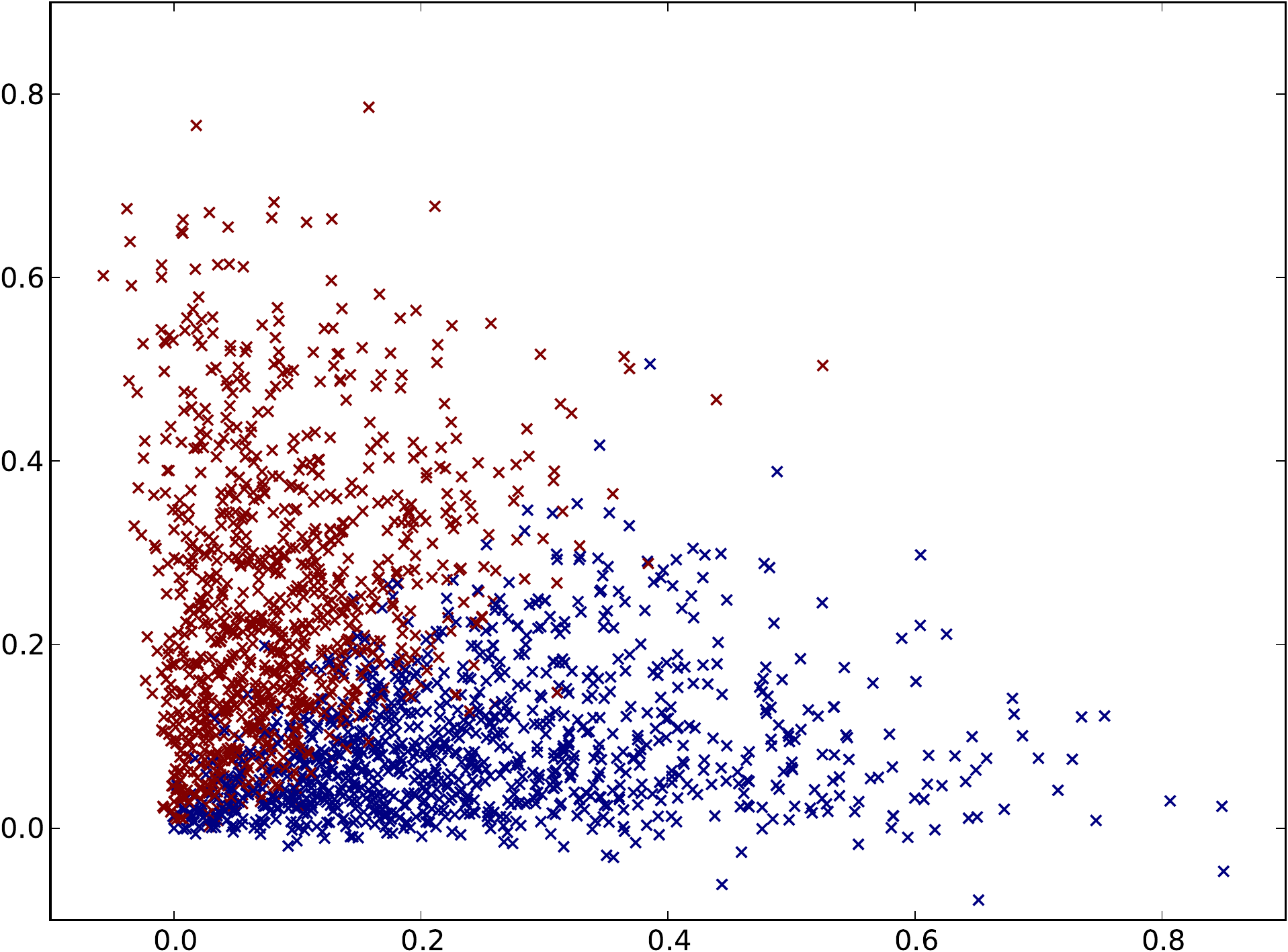}
\end{center}
\caption{An example of estimated latent position $\{\hat{X}_i\}$ for the distribution described in section~\ref{sec:sim}. Each point is colored according to class labels $\{Y_i\}$. For the original latent position $\{X_i\}$, the two classes would be perfectly separated by the line $y=x$. In this figure the two classes are nearly separated but have some overlap. Note also that some estimated positions are outside the support of the original distribution.}
\label{fig:simScatter}
\end{figure}

To demonstrate our results, we considered a problem where perfect classification is possible. Each $X_i:\Omega \mapsto\Re^2$ is distributed according to a Dirichlet distribution with parameter $\alpha=[2,2,2]^\top$ where we keep just the first two coordinates. The class labels are determined by the $X_i$ with $Y_i=\mathbb{I}\{X_{i1}<X_{i2}\}$ so in particular $L^*=0$.

For each $n\in\{100,200,\dots,2000\}$, we simulated 500 instances of the $\{X_i\}$ and sample the associated random graph. For each graph, we used our technique to embed each vertex in two dimensions. \
To facilitate comparisons, we used the matrix $\mathbf{X}$ to construct the matrix $\hat{\mathbf{X}}$ via transformation by the optimal orthogonal $\mathbf{W}$. Figure~\ref{fig:simScatter} illustrates our embedding for $n=2000$ with each point corresponding to a row of $\hat{\mathbf{X}}$ with points colored according the class labels $\{Y_i\}$. To demonstrate our results from section~\ref{sec:rdpg}, figure~\ref{fig:simFroErr} shows the average square error in the latent position estimation per vertex.

\begin{figure}
\begin{center}
\includegraphics[width=.9\textwidth]{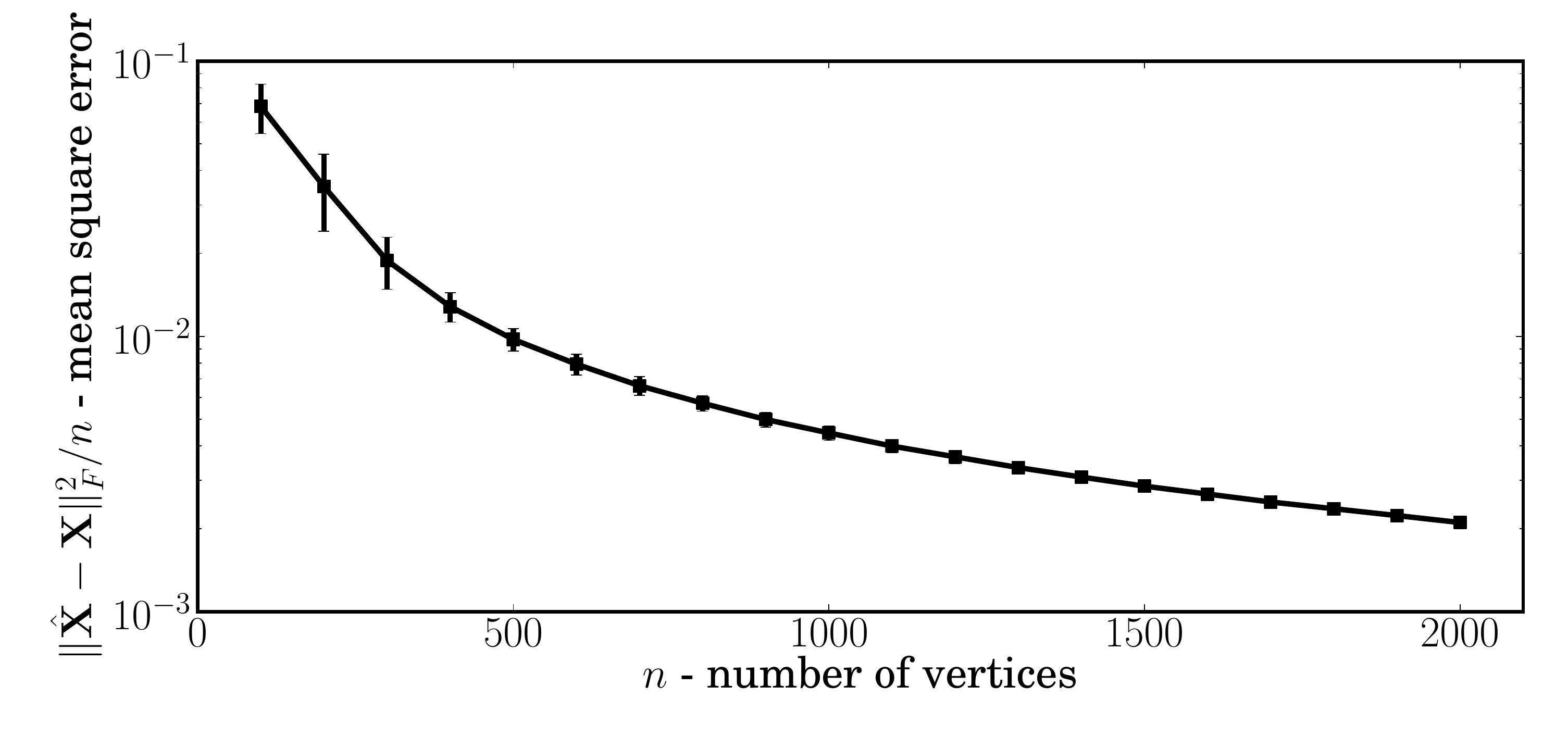}
\end{center}
\caption{Mean square error versus number of vertices. This figure shows the mean square error in latent position estimation per vertex, given by $\|\hat{\mathbf{X}}-\mathbf{X}\|_F^2/n$,  for the simulation described in section~\ref{sec:sim}. 
%On average, the estimated latent positions converge rapidly to the true latent positions as the number of vertices in the graph increases. 
The error bars are given by the standard deviation of the average square error over 500 monte carlo replicates for each $n$. On average, the estimated latent positions converge rapidly to the true latent positions as the number of vertices in the graph increases.}
\label{fig:simFroErr}
\end{figure}

For each graph, we used leave-one-out cross validation to evaluate the error rate for $k$-nearest-neighbors for $k=2\lfloor\sqrt{n}/4\rfloor +1 $. We suppose that we observe all but 1 class label as in section~\ref{sec:knn}. Figure~\ref{fig:simLhatBoxplot} shows the classification error rates. The black line shows the classification error when classifying using $\hat{\mathbf{X}}$ while the red line shows the classification error when classifying using $\mathbf{X}$. Unsurprisingly, classifying using $\hat{\mathbf{X}}$ gives worse performance. However we still see steady improvement as the number of vertices increases, as predicted by our universal consistency result. Indeed, this figure suggests that the rates of convergence may be similar for both $\mathbf{X}$ and $\hat{\mathbf{X}}$.

\begin{figure}
\begin{center}
\includegraphics[width=.9\textwidth]{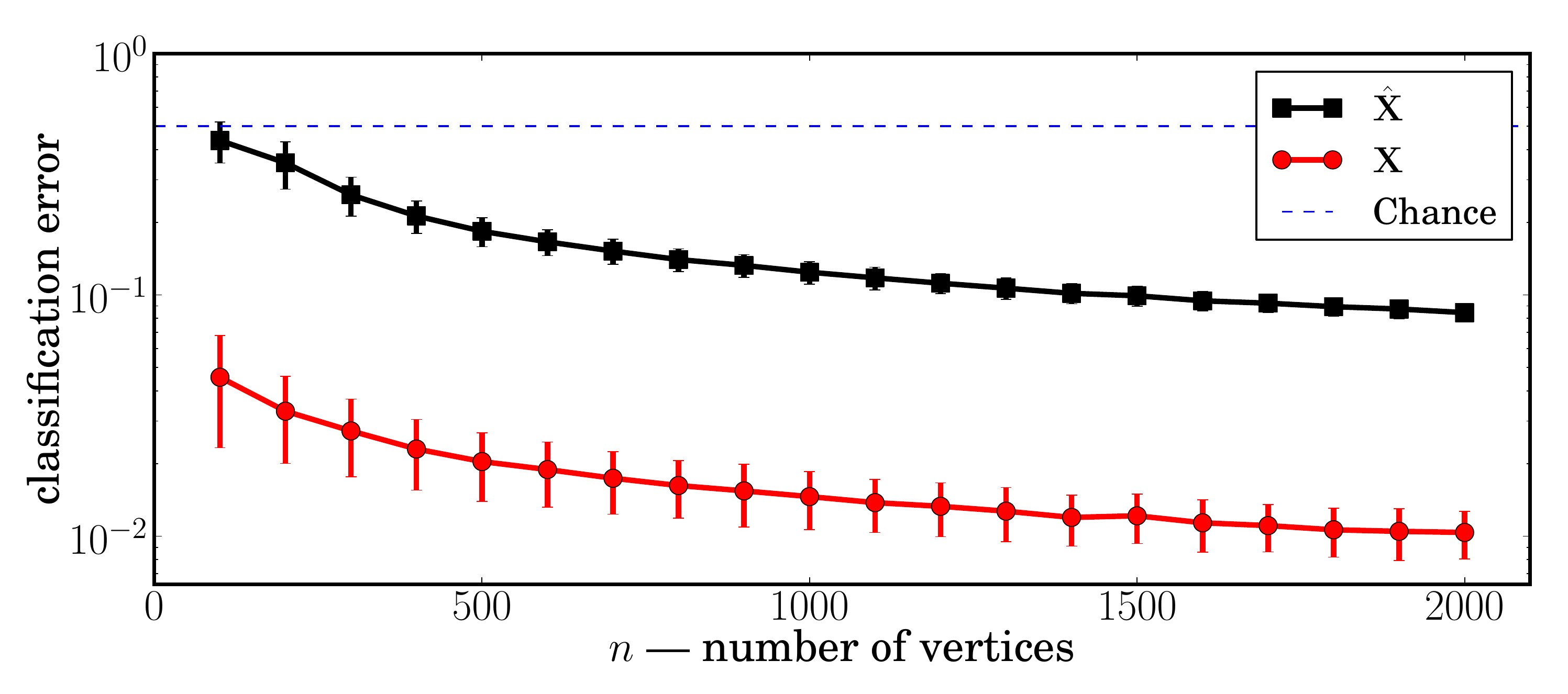}
\end{center}
\caption{Leave-one-out cross validation classification error estimates using $k$-nearest neighbors for the simulations described in section~\ref{sec:sim}. The black line show the classification error when classifying using $\hat{\mathbf{X}}$ while the red line shows the error rates when classifying using $\mathbf{X}$. Error bars show the standard deviation over the 500 monte carlo replicates. Chance classification error is $0.5$; $L^*=0$. This figure suggests the rates of convergence may be similar for both $\mathbf{X}$ and $\hat{\mathbf{X}}$. }
\label{fig:simLhatBoxplot}
\end{figure}

\subsection{Wikipedia Graph}
For this data (\cite{ma2012fusion}, \url{http://www.cis.jhu.edu/~zma/zmisi09.html}), each vertex in the graph corresponds to a Wikipedia page and the edges correspond to the presence of a hyperlink between two pages (in either direction). We consider this as an undirected graph. Every article within two hyperlinks of the article ``Algebraic Geometry'' was included as a vertex in the graph. This resulted in $n=1382$ vertices.  Additionally, each document, and hence each vertex, was manually labeled as one of the following: Category (119), Person (372), Location (270), Date (191) and Math (430).

To investigate the implications of the results presented thus far, we performed a pair of illustrative investigations.
First, we used our technique on random induced subgraphs and used leave-one-out cross validation to estimate error rates for each subgraph. We used $k=9$ and $d=10$ and performed 100 monte carlo iterates of random induced subgraphs with $n\in\{100,200,\dotsc,1300\}$ vertices. Figure~\ref{fig:wiki_subgraph} shows the mean classification error estimates using leave-one-out cross validation on each randomly selected subgraph. Note, the chance error rate is $1-430/1382 =0.689$.
%Comparing figures~\ref{fig:wiki_seesub} and \ref{fig:wiki_subgraph} suggests that there are trade-offs between the number of vertices observed and the number of class labels observed. This trade-off represents a key area of future study, especially when selecting methods for collecting graph data. 

\begin{figure}
\begin{center}
\includegraphics[width=\textwidth]{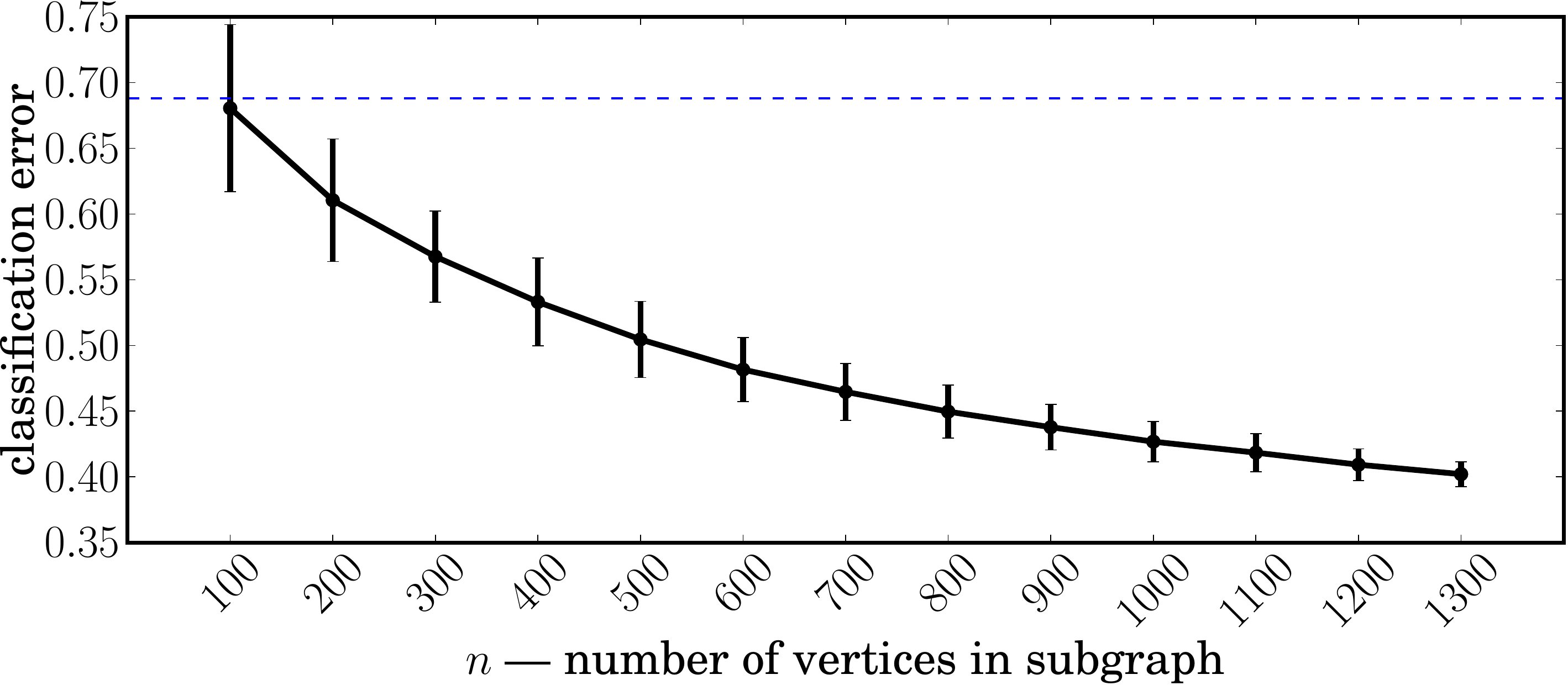}
\end{center}
\caption{Error rate using leave-one-out cross validation for random induced subgraphs. Chance classification error is $\approx 0.688$ shown in blue. This illustrates the improvement vertex classification as the number of vertices and the number of observed class labels increases.	}
\label{fig:wiki_subgraph}
\end{figure}

We also investigated the performance of our procedure for different choices of $d$, the embedding dimension, and $k$, the number of nearest neighbors. Because this data has 5 classes, we use the standard $k$-nearest-neighbor algorithm and break ties by choosing the first label as ordered above.
Using leave-one-out cross validation, we calculated an estimated error rate for each $d\in \{1,\dotsc,50\}$ and $k\in\{1,5,9,13,17\}$. The results are shown in Figure~\ref{fig:wiki_kd}.  This figure suggests that our technique will be robust to different choices of $k$ and $d$ within some range.

\begin{figure}[t!]
\begin{center}
\includegraphics[width=.9\textwidth]{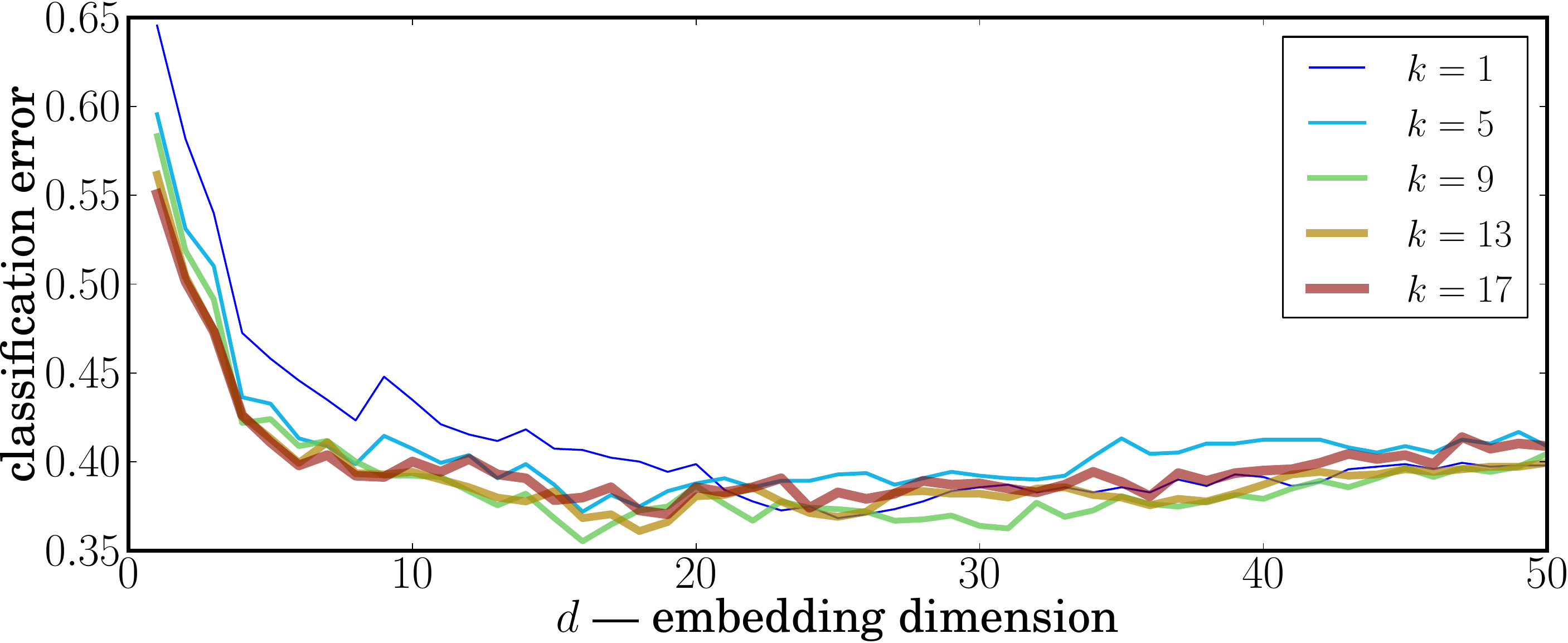}
\end{center}
\caption{Leave-one-out error rate plotted against the embedding dimension $d$ for different choices of $k$ (see legend). Each line corresponds to a different choice for the number of nearest neighbors $k$. All results are better than chance $\approx 0.688$. We see that method is robust to changes of $k$ and $d$ near the optimal range.}
\label{fig:wiki_kd}
\end{figure}
%
%We also compared the error rates as we increase the fraction of the vertices for which we observe class labels. Here, we randomly selected a certain percentage of class labels for training and classified the remaining vertices. We used $k=9$ and $d=10$ and performed 100 monte carlo iterates for observed fractions in $\{0.1, 0.15,\dotsc,0.9\}$. Figure~\ref{fig:wiki_seesub} shows the classification error for each training set size.
%
%\begin{figure}
%\begin{center}
%\includegraphics[width=\textwidth]{Wiki_errorByFractionObserved_errbar}
%\end{center}
%\caption{Error rate according to the fraction of the graph used for training. Error rate was computed by testing on the remaining graph. Chance classification error is $\approx 0.688$ shown in blue. This figure suggests that, to a point, there are increasing benefits to performance as we observe more class labels for the graph.}
%\label{fig:wiki_seesub}
%\end{figure}

\section{Conclusion}
Overall, we have shown that under the random dot product graph model, we can consistently estimate the latent positions provided they are independent and identically distributed. We have shown further that these estimated positions are also sufficient to consistently classify vertices. We have shown that this method works well in simulations and can be useful in practice for classifying documents based on their links to other documents.

\section*{References}
\bibliographystyle{plainnat}
\bibliography{LstarXhat}
\end{document}